\documentclass{article}
\usepackage{iclr2027_conference,times}
\usepackage[T1]{fontenc}
\usepackage{amsmath,amssymb,amsthm}
\usepackage{graphicx}
\usepackage{wrapfig}
\usepackage{needspace}
\usepackage{booktabs}
\usepackage{multirow}
\usepackage{placeins}
\usepackage{subcaption}
\usepackage{algorithm}
\usepackage{algpseudocode}
\usepackage{xspace}
\usepackage{xcolor}
\usepackage{microtype}
\usepackage{url}
\usepackage{tikz}
\usetikzlibrary{arrows.meta,positioning,calc}
\usepackage{pgfplots}
\pgfplotsset{compat=1.18}
\definecolor{FigInk}{HTML}{1F2933}
\definecolor{FigMain}{HTML}{2F6DB5}
\colorlet{FigMainText}{FigMain!75!black}
\definecolor{FigBase}{HTML}{8E98A4}
\definecolor{FigMainLight}{HTML}{D9E7F7}
\definecolor{FigNeutral}{HTML}{F4F6F8}
\definecolor{FigOutline}{HTML}{B6BFCA}
\definecolor{FigGrid}{HTML}{E6E9ED}
\definecolor{FigAxis}{HTML}{9AA3AE}
\definecolor{FigHeatLo}{HTML}{EEF4FB}
\definecolor{FigHeatHi}{HTML}{2F6DB5}
\pgfplotsset{every axis/.append style={
  axis lines=left,axis line style={FigAxis,line width=0.5pt},
  tick style={FigAxis},tick label style={font=\small,text=FigInk},
  label style={font=\small,text=FigInk},
  ymajorgrids,grid style={FigGrid,line width=0.4pt},
  error bars/y dir=both,error bars/y explicit,
  error bars/error bar style={line width=0.55pt},
  error bars/error mark options={rotate=90,mark size=1.8pt,line width=0.55pt},
  legend style={draw=none,fill=none,font=\small,legend cell align=left},
  clip=false}}
\tikzset{
  ourscurve/.style={color=FigMain,mark=*,mark size=2.2pt,mark options={solid,fill=FigMain,draw=FigMain},line width=1.1pt},
  basecurve/.style={color=FigBase,mark=square*,mark size=2.1pt,mark options={solid,fill=FigBase,draw=FigBase},line width=1.1pt},
  paneltitle/.style={font=\small\bfseries,text=FigInk,anchor=south},
  keptcell/.style={fill=FigMainLight,draw=FigMain,text=black},
}
\usepackage[colorlinks=true,linkcolor=blue,citecolor=blue,urlcolor=black]{hyperref}

\DeclareRobustCommand{\paperref}[2]{\hyperref[#2]{\underline{#1~\ref*{#2}}}}
\DeclareRobustCommand{\papereqref}[2]{\hyperref[#2]{\underline{#1~\textup{(\ref*{#2})}}}}
\DeclareRobustCommand{\papernumref}[1]{\hyperref[#1]{\underline{\ref*{#1}}}}

\newcommand{\method}{Loop Dropout\xspace}
\newcommand{\E}{\mathbb{E}}
\newcommand{\R}{\mathbb{R}}
\newcommand{\Bernoulli}{\operatorname{Bernoulli}}

\newcommand{\CE}{\operatorname{CE}}
\newcommand{\tr}{\operatorname{tr}}
\newcommand{\ones}{\mathbf{1}}
\newtheorem{lemma}{Lemma}

\iclrfinalcopy
\renewcommand{\headrulewidth}{0pt}

\author{%
  \begin{tabular}{@{}c@{\hspace{1.6em}}c@{\hspace{1.6em}}c@{\hspace{1.6em}}c@{}}
    Zirui Zhu & Hailun Xu & Xuanlei Zhao & Yong Liu\\[3pt]
    Yingxuan Ren & Kanchan Sarkar & Kun Xu & Yang You
  \end{tabular}\\[6pt]
  {\small\textit{Correspondence to:}\;
    \{\href{mailto:zirui@comp.nus.edu.sg}{zirui},
    \href{mailto:youy@comp.nus.edu.sg}{youy}\}@comp.nus.edu.sg}%
}

\makeatletter
\renewcommand{\@maketitle}{%
  \vbox{\hsize\textwidth
    \centering
    {\LARGE\scshape\@title\par}
    \vskip 12pt
    {\normalsize\@author\par}
    \vskip 0.3in minus 0.1in
  }%
}
\makeatother

\hypersetup{
  pdfauthor={Zirui Zhu, Hailun Xu, Xuanlei Zhao, Yong Liu, Yingxuan Ren, Kanchan Sarkar, Kun Xu, Yang You}
}
\title{Loop Dropout: Regularizing Shared Updates in Looped Language Models}
\hypersetup{
  pdftitle={Loop Dropout: Regularizing Shared Updates in Looped Language Models},
  pdfsubject={Fine-tuning shared updates in looped language models},
  pdfkeywords={Looped Language Models, Parameter-Efficient Fine-Tuning, Low-Rank Adaptation}
}

\begin{document}
\maketitle

\begin{abstract}
Looped language models separate computational depth from parameter count by repeatedly applying the same transformer block.
Adapting these models requires a shared update that remains effective as hidden states evolve throughout the recurrent computation.
Our empirical analysis reveals a pronounced late-loop bias in standard low-rank adaptation (LoRA): the shared update is more effective at later loop positions.
This imbalance motivates training shared updates under varying combinations of their applications.
Randomly omitting adapter applications alone, however, does not improve task performance; it reduces expected update strength during training while leaving inference unchanged.
We introduce \method, which couples stochastic masking of adapter applications with inverse-survival rescaling to preserve expected update strength and promote effective adaptation across loops.
Extensive experiments demonstrate improved mathematical reasoning across model sizes, adapter ranks and training recipes, with benefits extending to general instruction tuning and code generation.
\method outperforms existing LoRA variants and adapter regularizers, while further analysis shows stronger early-loop adaptation.
Every backbone loop remains active, and inference applies the adapter at all loops using standard LoRA without additional trainable parameters or inference computation.
%

\end{abstract}

\section{Introduction}
\label{sec:introduction}

Looped language models separate computational depth from parameter count by repeatedly applying the same transformer block~\citep{dehghani2019universal,geiping2025huginn,zhu2025ouro}.
This reuse allows compact models to perform multiple stages of computation and match substantially larger standard language models~\citep{zhu2025ouro}.
Adapting these models requires a shared update that remains effective as hidden states evolve throughout the recurrent computation.

Low-rank adaptation (LoRA) learns a compact update to frozen pretrained weights~\citep{hu2022lora}, with variants modifying its parameterization, optimization and regularization~\citep{hayou2024loraplus,kalajdzievski2023rslora,liu2024dora,lin2024loradropout}.
In a looped model, the same update acts on different hidden states at different distances from the final prediction.
Standard LoRA fine-tuning activates all applications together and optimizes their combined effect through the final loss.
Step-resolved data attribution shows that training examples influence the loops unevenly~\citep{kaissis2026sdi}, motivating an empirical analysis of how effectively the learned update works across the recurrence.

\textbf{Our empirical analysis reveals a pronounced late-loop bias in shared adaptation.}
\paperref{Figure}{fig:teaser_activation} evaluates a trained adapter at each loop position in turn, with the adapter active only at that position and the backbone running all four loops.
For Ouro-1.4B fine-tuned on GSM8K with standard LoRA, loss reduction falls from 32\% when the update is applied at the fourth loop to 12\% when applied at the first, relative to the same frozen model.
The same late-loop bias persists across training configurations.
This imbalance motivates learning an update that supports effective adaptation throughout the recurrent computation.

\begin{figure}[t]
\centering
\begingroup
\input{figures/teaser_plot.tex}
\endgroup
\caption{\textbf{Loop Dropout reduces the late-loop bias of shared adaptation.}
Ouro~\citep{zhu2025ouro} uses four loops through a shared transformer block by default.
To probe adaptation across loops after GSM8K fine-tuning, we activate the adapter in one loop at a time while retaining all four backbone loops.
LoRA's loss reduction falls from 32\% at the fourth loop to 12\% at the first; Loop Dropout maintains 31--36\% across positions.
All reductions are relative to the frozen model and use final-loop teacher-forced test loss.}
\label{fig:teaser}
\end{figure}

Dropout encourages features to remain useful across different combinations of other features~\citep{srivastava2014dropout}.
For a shared adapter, this principle suggests learning across varying combinations of its applications.
However, we find that randomly omitting adapter applications alone does not improve task performance.
Stochastic omission creates a training--inference mismatch: it reduces the expected training update, while inference applies the full update at every loop.

We therefore introduce \method, which couples stochastic masking of adapter applications with inverse-survival rescaling to promote effective adaptation across loops.
Each retained update is divided by its survival probability, preserving the expected update strength during training.
This design trains the shared update under varying application patterns while keeping every backbone loop active.
At inference, all applications are active and the adapter follows standard LoRA, with no additional trainable parameters or inference computation.

\paperref{Figure}{fig:teaser_reduction} shows that \method maintains loss reductions of 31--36\% across all four loops, narrowing the disparity between early and late applications.
When adapters trained with four loops are evaluated at eight loops without further fine-tuning, \method outperforms LoRA by 4.75 percentage points on GSM8K.

Extensive experiments demonstrate that \method delivers consistent gains on mathematical reasoning and improves general instruction tuning, with the clearest benefits on code generation.
\method outperforms existing LoRA variants and adapter regularizers such as CoTo~\citep{zhuang2025coto}, highlighting the value of tailoring adaptation to the recurrent structure of looped language models.

Our contributions are as follows:
\begin{itemize}
\item We identify a pronounced late-loop bias in standard LoRA fine-tuning, exposing the challenge of learning shared updates that remain effective across loops.
\item We introduce \method, coupling independent masks over complete applications of the shared update with inverse-survival rescaling to promote effective adaptation across loops while retaining standard LoRA inference.
\item We demonstrate improved mathematical adaptation and transfer across model sizes, adapter ranks and training recipes, supported by stronger early-loop adaptation, zero-shot generalization to deeper recurrence and comparisons with alternative regularizers.
\end{itemize}

\section{Preliminaries}
\label{sec:preliminaries}

\paragraph{Looped computation.}
A looped language model embeds an input sequence into $h_0$ and applies a transformer block $F$ with shared parameters $W$ for $K$ loops:
\begin{equation}
h_t=F(h_{t-1};W),\qquad t=1,\ldots,K,\qquad z=\mathrm{Head}(h_K).
\label{eq:loop}
\end{equation}
We call each application of the shared transformer block a \emph{loop}; the recurrence depth $K$ is the number of loops.
Increasing $K$ adds computation without another copy of the block parameters.
We use the base Ouro models~\citep{zhu2025ouro}, whose default recurrence depth is $K=4$, and fix the depth for all examples rather than using their adaptive exit gates.

\paragraph{Shared low-rank updates.}
LoRA~\citep{hu2022lora} adapts a frozen matrix $W_i\in\R^{d_{\mathrm{out}}\times d_{\mathrm{in}}}$ through $\Delta_i=\frac{\alpha}{r}B_iA_i$, where $A_i\in\R^{r\times d_{\mathrm{in}}}$, $B_i\in\R^{d_{\mathrm{out}}\times r}$, and $r$ is the rank.
We write $\Delta$ for the collection of these updates across the recurrent block.
Standard fine-tuning then optimizes
\begin{equation}
\mathcal{L}_{\mathrm{LoRA}}(\Delta)=\E_{(x,y)}\,\CE\big(\mathrm{Head}(h_K),y\big),
\qquad h_t=F(h_{t-1};W+\Delta),
\label{eq:shared_update}
\end{equation}
with loss on the target tokens $y$.
Each $\Delta_i$ has one pair of trainable factors shared across all $K$ loops, and its gradient includes contributions from every application.
Training separate factors at each loop is an alternative, but multiplies the adapter parameter count by $K$ at a fixed rank.
We compare both equal-rank and equal-parameter versions of this alternative.

\section{Loop Dropout}
\label{sec:method}

The late-loop bias identified in \paperref{Section}{sec:introduction} motivates learning a shared update that remains effective throughout the recurrent computation.
\method couples loop-level masking with inverse-survival rescaling: masking trains the update under varying combinations of its applications, while rescaling preserves its expected strength at every loop.
\paperref{Figure}{fig:overview} illustrates the gates used in training and at inference.

\begin{figure}[t]
\centering
\begingroup
\tikzset{
  block/.style={draw=FigOutline,fill=FigNeutral,line width=0.7pt,rounded corners=3pt,
    minimum width=3.30cm,minimum height=0.84cm,inner sep=4pt},
  gate/.style={draw=FigOutline,fill=FigNeutral,line width=0.5pt,rounded corners=2pt,
    minimum width=0.74cm,minimum height=0.56cm,inner sep=0pt,text=FigInk},
  kept/.style={gate,keptcell,line width=0.7pt},
  dropped/.style={gate,fill=white,draw=FigOutline,dash pattern=on 1.6pt off 1.2pt,text=FigInk!60},
  flow/.style={-{Latex[length=1.7mm]},draw=FigInk!85,line width=0.7pt},
  recurrence/.style={flow,line width=0.85pt,rounded corners=4pt},
  rowlabel/.style={font=\small,anchor=west,inner sep=0pt,text=FigInk},
  every node/.style={text=FigInk},
}
\begin{subfigure}[b]{0.42\textwidth}
\centering
\begin{tikzpicture}[x=1cm,y=1cm,font=\normalsize]
\path[use as bounding box] (0,-0.10) rectangle (5.55,2.72);
\node (hin) at (0.28,0.90) {$h_0$};
\node[block] (shared) at (2.75,0.90) {$F(\cdot;\,W+{\color{FigMainText}g_t\Delta})$};
\node (hout) at (5.25,0.90) {$h_K$};
\draw[flow] (hin.east) -- (shared.west);
\draw[flow] (shared.east) -- (hout.west);
\draw[recurrence] (4.68,0.90) -- (4.68,2.22) -- (0.88,2.22) -- (0.88,0.90);
\fill[FigInk!85] (4.68,0.90) circle (1.1pt);
\node[font=\small] at (2.75,2.50) {$K$ Loops, Shared $W$ and $\Delta$};
\end{tikzpicture}
\caption{Shared Recurrent Block}
\label{fig:overview_block}
\end{subfigure}%
\hfill
\begin{subfigure}[b]{0.55\textwidth}
\centering
\begin{tikzpicture}[x=1cm,y=1cm,font=\normalsize]
\path[use as bounding box] (6.25,-0.10) rectangle (13.85,2.72);
\node[rowlabel] at (6.25,1.60) {LoRA};
\node[rowlabel,text=FigMainText] at (6.25,0.90) {Loop Dropout (Training)};
\node[rowlabel,text=FigMainText] at (6.25,0.20) {Loop Dropout (Inference)};
\foreach \t/\xx in {1/10.35,2/11.40,K/13.45} {
  \node at (\xx,2.32) {$g_{\t}$};
  \node[gate] at (\xx,1.60) {$1$};
  \node[gate] at (\xx,0.20) {$1$};
}
\foreach \yy in {2.32,1.60,0.90,0.20} {\node[text=FigInk!60] at (12.42,\yy) {$\cdots$};}
\foreach \xx in {10.35,13.45} {\node[kept] at (\xx,0.90) {$\tfrac{1}{q}$};}
\node[dropped] at (11.40,0.90) {$0$};
\end{tikzpicture}
\caption{Gates on the Shared Update}
\label{fig:overview_gates}
\end{subfigure}
\endgroup
\caption{\textbf{Loop Dropout regularizes shared adaptation across loops.}
The shared block $F$ reuses the same $W$ and $\Delta$ over $K$ loops.
Following \papereqref{Eq.}{eq:loop_dropout}, training uses $g_t=b_t/q$, where $q=1-p$ and $b_t\sim\Bernoulli(q)$ independently for each example and loop.
The training row illustrates one sampled pattern: retained updates have gate $1/q$, and dropped updates have gate $0$.
LoRA and inference use $g_t=1$ at every loop.
A zero gate removes only $\Delta$; every backbone loop remains active.}
\label{fig:overview}
\end{figure}

\subsection{Loop-Level Masking}
\label{sec:loop_dropout}

For each training example, \method samples independent masks $b_t\sim\Bernoulli(q)$, where $q=1-p\in(0,1]$ is the survival probability, and computes
\begin{equation}
h_t=F(h_{t-1};W+g_t\Delta),\qquad g_t=\frac{b_t}{q},\qquad t=1,\ldots,K.
\label{eq:loop_dropout}
\end{equation}
One mask is shared by all adapted modules and token positions within a loop, as illustrated in \paperref{Figure}{fig:overview_gates}.
If $b_t=0$, the loop uses only the pretrained weights; otherwise, it uses the shared update scaled by $1/q$.
The loss is evaluated at the final loop as in \papereqref{Eq.}{eq:shared_update}, all backbone weights remain frozen, and \paperref{Algorithm}{alg:loop_dropout} summarizes the procedure.

Masking a complete application changes both the input states reaching later loops and the set of adapter applications that contribute to the final prediction.
Each active application is thus trained with varying combinations of earlier and later applications.
Sharing one mask across all adapted modules makes the complete adapter application the unit of regularization.

\begin{algorithm}[t]
\caption{Training with Loop Dropout}
\label{alg:loop_dropout}
\begin{algorithmic}[1]
\Require frozen block $F(\cdot;W)$, shared LoRA factors $\Delta$, depth $K$, survival probability $q$
\For{each training example $(x,y)$ in a mini-batch}
  \State $h_0\gets\mathrm{Embed}(x)$
  \For{$t=1,\ldots,K$}
    \State $b_t\sim\Bernoulli(q)$ independently; $g_t\gets b_t/q$
    \State $h_t\gets F(h_{t-1};W+g_t\Delta)$ \Comment{same gate for all modules and tokens}
  \EndFor
  \State $\ell_{x,y}\gets\CE(\mathrm{Head}(h_K),y)$
\EndFor
\State Update the LoRA factors using the mean mini-batch loss.
\State \textbf{Inference:} set $g_t=1$ at every loop.
\end{algorithmic}
\end{algorithm}

\subsection{Inverse-Survival Rescaling}
\label{sec:rescale}

Masking alone reduces each loop's expected training update from $\Delta$ to $q\Delta$, while inference uses the full update $\Delta$.
The factor $1/q$ in \papereqref{Eq.}{eq:loop_dropout} compensates for this reduction, matching each loop's expected training update to its inference update.
Writing $g_t=1+\varepsilon_t$ with $\varepsilon_t=(b_t-q)/q$, the effective weights of loop $t$ decompose as
\begin{equation}
W+g_t\Delta
=\underbrace{W+\Delta}_{\text{inference weights}}
+\underbrace{\varepsilon_t\Delta}_{\text{zero-mean perturbation}},
\qquad
\E[\varepsilon_t]=0,
\qquad
\operatorname{Cov}(\varepsilon)=\frac{p}{q}\,I_K.
\label{eq:gate_decomposition}
\end{equation}
The perturbation acts along the learned update and leaves the pretrained weights unchanged.
At $p=1/2$, each loop applies either $2\Delta$ or no update with equal probability, while its mean update remains $\Delta$.
This is the mean-preserving convention of inverted dropout~\citep{srivastava2014dropout}, applied to complete adapter applications.

This property extends to the composed recurrent output at first order.
Let $z_\eta(g)$ denote the final logits when loop $t$ uses $W+\eta g_t\Delta$, where $\eta$ is an auxiliary update scale for expansion about the frozen computation at $\eta=0$.

\begin{lemma}[First-order consistency of recurrent adaptation]
\label{lem:first_order_rescale}
Fix an input, update $\Delta$, depth $K$ and survival probability $q\in(0,1]$.
Suppose $F$ is twice continuously differentiable in its state and weights, and $\mathrm{Head}$ is twice continuously differentiable, near the frozen trajectory.
For independent $b_t\sim\Bernoulli(q)$, as $\eta\to0$,
\begin{equation}
\begin{aligned}
\E_b\,z_\eta(b/q)&=z_\eta(\ones)+O(\eta^2),\\
\E_b\,z_\eta(b)&=z_{q\eta}(\ones)+O(\eta^2).
\end{aligned}
\label{eq:first_order_rescale}
\end{equation}
\end{lemma}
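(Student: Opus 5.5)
The approach is a first-order Taylor expansion of $z_\eta(g)$ in $\eta$, uniformly over the finitely many gate vectors that occur. Fix the input and write $\bar h_t$ for the frozen trajectory, $\bar h_t=F(\bar h_{t-1};W)$ with $\bar h_0=h_0$. For a fixed gate vector $g\in\R^K$, the recurrence $h_t=F(h_{t-1};W+\eta g_t\Delta)$ is a composition of $C^2$ maps, and $\eta$ enters only through the weight argument. Hence $\eta\mapsto z_\eta(g)$ is $C^2$ near $\eta=0$ and obeys
\[
z_\eta(g)=\bar z+\eta\, L(g)+R(\eta,g),
\]
where $\bar z=\mathrm{Head}(\bar h_K)$ and $L(g)$ is the derivative at $\eta=0$. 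The remainder satisfies $\|R(\eta,g)\|\le C\eta^2$, with $C$ bounded by second derivatives of $F$ and $\mathrm{Head}$ on a compact neighbourhood of the frozen trajectory.

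The key step is to show that $L$ is \emph{linear} in $g$. I would differentiate the recurrence. Setting $\delta_t=\partial_\eta h_t|_{\eta=0}$, we get $\delta_0=0$ and
\[
\delta_t=J_t\,\delta_{t-1}+g_t\,D_t[\Delta],
\qquad J_t=\partial_h F(\bar h_{t-1};W),
\qquad D_t[\Delta]=\partial_W F(\bar h_{t-1};W)[\Delta].
\]
Unrolling gives
\[
\delta_K=\sum_{t=1}^K g_t\, J_K\cdots J_{t+1} D_t[\Delta],
\qquad
L(g)=\nabla\mathrm{Head}(\bar h_K)\,\delta_K=\sum_t g_t\,v_t,
\]
where the vectors $v_t$ are independent of $g$, and $L(1)=\sum_t v_t$. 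Linearity holds precisely because the expansion is taken about the frozen trajectory, which does not depend on $g$; cross terms between different loops appear only at order $\eta^2$.

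Next I take expectations. The gate vector takes at most $2^K$ values, all with entries in $[0,1/q]$, so on that finite set the constant $C$ is uniform and $\E_b R=O(\eta^2)$. Linearity of expectation then gives $\E_b\,z_\eta(b/q)=\bar z+\eta\sum_t v_t\,\E[b_t]/q+O(\eta^2)=z_\eta(\ones)+O(\eta^2)$. Here I use $z_\eta(\ones)=\bar z+\eta\sum_t v_t+O(\eta^2)$ once more. Similarly, $\E_b\,z_\eta(b)=\bar z+q\eta\sum_t v_t+O(\eta^2)$, and this equals $z_{q\eta}(\ones)$ up to $O(q^2\eta^2)=O(\eta^2)$.

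The main obstacle is controlling the remainder uniformly. We must ensure that the perturbed trajectories stay within the neighbourhood where the $C^2$ bounds hold for all gates, including the scaled gate $1/q$. This follows from continuity and the finiteness of the gate set, since for $\eta$ small enough every perturbed trajectory stays close to $\bar h_t$. Independence of the $b_t$ is not actually needed at first order; only the marginal means $\E b_t=q$ are used.
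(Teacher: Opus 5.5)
Your proposal is correct and follows the paper's proof essentially step for step. Like the paper, you linearize the recurrence about the $g$-independent frozen trajectory and unroll it to get $\partial_\eta z|_{\eta=0}=\sum_t g_t\,P J_K\cdots J_{t+1} d_t$. You then use finiteness of the mask support for a uniform $O(\eta^2)$ remainder and take expectations using only $\mathbb{E}[b_t]=q$.

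The paper adds one refinement you do not have: an explicit remainder bound. It Taylor-expands $Z(\eta g)$ around the mean gate, so the linear term cancels exactly in expectation. This yields constants proportional to the gate variance, namely $MKp\eta^2/(2q)$ and $MKpq\eta^2/2$.
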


Each first-order contribution includes an update's effect propagated through all subsequent backbone loops.
Rescaling preserves the mean of their sum, while unscaled masking multiplies it by $q$.
\paperref{Appendix}{app:first_order_rescale} derives these contributions and bounds the second-order remainder.
Thus rescaling aligns the leading adaptation effect during training with that at inference, complementing the varying application patterns produced by masking.
The comparison in \paperref{Table}{tab:mask_controls} shows higher accuracy when loop-level masking is coupled with this rescaling.

\subsection{Training Across Application Patterns}
\label{sec:mixture}

The two components jointly train the shared update across a distribution of application patterns while preserving its expected strength.
Let $\ell_\Delta(g)$ denote the loss for one example under a gate vector $g$, and let $\ones_S$ indicate the subset $S$ of loops with active updates.
The objective is
\begin{equation}
\E_b\,\ell_\Delta(b/q)
=\sum_{S\subseteq\{1,\ldots,K\}}q^{|S|}p^{K-|S|}\,
\ell_\Delta(\ones_S/q).
\label{eq:mixture}
\end{equation}
This weighted average jointly optimizes single-loop and multi-loop applications of the same update, exposing the shared parameters to both sparse and dense application patterns.
\paperref{Section}{sec:early_adaptation} examines how the learned update's effectiveness changes across loop positions.

For a fixed learned update, centering the gates at $\ones$ also gives a local regularization view of this objective.
Expanding \papereqref{Eq.}{eq:mixture} around the inference gate yields
\begin{equation}
\E_b\,\ell_\Delta(b/q)
=\ell_\Delta(\ones)
+\frac{p}{2q}\sum_{t=1}^{K}v_t^{\top}H\,v_t
+R,
\qquad
v_t=\frac{\partial z}{\partial g_t}\bigg|_{g=\ones},
\label{eq:penalty}
\end{equation}
where $H\succeq0$ is the Hessian of the cross-entropy with respect to the logits $z$ and $R$ collects the higher-order terms together with the part of the gate Hessian that involves second derivatives of $z$.
The first term is the standard LoRA objective; the nonnegative curvature term describes local regularization along each application's logit sensitivity, with strength $p/(2q)$ equal to half the gate variance~\citep{wager2013dropout,bishop1995noise}.
\paperref{Appendix}{app:rescale_analysis} gives the derivation, the exact scaling relations and the gate covariance of the controls in \paperref{Section}{sec:mask_controls}.

Our default uses $p=0.5$ and requires only a scalar gate on each adapter output.
The method adds no trainable parameters or inference computation to standard LoRA.
Implementation details and alternative mask distributions are given in \paperref{Appendices}{app:implementation} and~\papernumref{app:variants}.

\section{Experiments}
\label{sec:experiments}

We evaluate mathematical adaptation on Ouro-1.4B and Ouro-2.6B and compare with existing adaptation methods, then examine early-loop effectiveness, the roles of masking and rescaling, and robustness across adapter ranks and recurrence depths.

\subsection{Experimental Setup}
\label{sec:experimental_setup}

\paragraph{Models and tasks.}
We use the base Ouro models with four loops.
Mathematical fine-tuning follows the MetaMathQA pipeline of LoRA-Pro~\citep{yu2024metamath,wang2025lorapro}: one epoch on 100k GSM-type examples, followed by zero-shot evaluation on GSM8K~\citep{cobbe2021gsm8k} and MATH-500~\citep{hendrycks2021math,lightman2024verify} with the MetaMath scorer.
Instruction tuning of Ouro-1.4B uses a 100k-example draw from T\"ulu~2~\citep{ivison2023tulu2} and evaluates code generation with EvalPlus~\citep{chen2021codex,austin2021program,liu2023evalplus}, knowledge and reasoning with MMLU and BBH~\citep{hendrycks2021mmlu,suzgun2023bbh}, truthfulness with TruthfulQA~\citep{lin2021truthfulqa}, and instruction following with IFEval~\citep{zhou2023ifeval}.
A smaller recipe fine-tunes directly on GSM8K and supports the initial activation diagnostic and extended robustness studies.
\paperref{Appendices}{app:experimental_details} and~\papernumref{app:evaluation_protocols} specify all recipes and scoring rules.

\paragraph{Methods.}
Default adapters use rank 16 and $\alpha/r=2$ on the seven projections of the recurrent block, giving 15.1M trainable parameters on Ouro-1.4B and 30.3M on Ouro-2.6B.
\method uses $p=0.5$.
We compare with LoRA~\citep{hu2022lora}, LoRA+~\citep{hayou2024loraplus}, CoTo~\citep{zhuang2025coto} and LoRA Dropout~\citep{lin2024loradropout}, and isolate masking and sharing through the controls in \paperref{Section}{sec:analysis}.
\paperref{Appendix}{app:lr_shifted} also compares rsLoRA~\citep{kalajdzievski2023rslora} under the direct GSM8K recipe (\paperref{Table}{tab:gsm8k_scale}).
Main mathematical comparisons tune five learning rates per method; perturbation controls fix a common rate.

\paragraph{Training and reporting.}
Comparisons match training examples, optimizer steps, sequence length and evaluation protocol.
Main mathematical results report mean and sample SD over three training seeds.
LoRA+ uses $\eta_B/\eta_A=4$ and the shifted candidate grid in \paperref{Appendix}{app:hyperparameters}.

\subsection{Overall Performance}
\label{sec:main_results}

\begin{table}[t]
\centering
\footnotesize
\setlength{\tabcolsep}{3.5pt}
\caption{\textbf{Mathematical reasoning performance.} Accuracy in \%, mean $\pm$ SD over three training seeds per method. Bold marks the highest mean within each model and benchmark.}
\label{tab:math_main}
\begin{tabular*}{\textwidth}{@{\extracolsep{\fill}}lcccccc}
\toprule
& \multicolumn{3}{c}{Ouro-1.4B} & \multicolumn{3}{c}{Ouro-2.6B} \\
\cmidrule(lr){2-4}\cmidrule(lr){5-7}
Benchmark & LoRA & LoRA+ & \method & LoRA & LoRA+ & \method \\
\midrule
GSM8K & 85.34 $\pm$ 0.64 & 85.54 $\pm$ 0.23 & \textbf{86.53 $\pm$ 0.44} & 87.62 $\pm$ 0.52 & 87.21 $\pm$ 0.38 & \textbf{88.73 $\pm$ 0.31} \\
MATH-500 & 38.87 $\pm$ 1.67 & 38.13 $\pm$ 0.42 & \textbf{47.53 $\pm$ 2.61} & 42.87 $\pm$ 1.17 & 43.40 $\pm$ 0.53 & \textbf{47.20 $\pm$ 0.53} \\
\bottomrule
\end{tabular*}

\par\medskip
\centering
\footnotesize
\setlength{\tabcolsep}{6pt}
\caption{\textbf{Instruction-tuning performance.} T\"ulu~2 recipe; accuracy in \%, mean $\pm$ SD over three training seeds. HumanEval+ uses EvalPlus; IFEval reports prompt-level strict accuracy. Average uses the six-benchmark set in \paperref{Appendix}{app:tulu_returned}, computed per seed. Bold marks the higher mean in each column.}
\label{tab:tulu_main}
\begin{tabular*}{\textwidth}{@{\extracolsep{\fill}}lccccc@{}}
\toprule
Method & HumanEval+ & MMLU & TruthfulQA MC2 & IFEval & Average (6) \\
\midrule
LoRA & 67.68 $\pm$ 3.05 & 68.64 $\pm$ 0.40 & 47.32 $\pm$ 1.02 & 46.33 $\pm$ 1.26 & 60.98 $\pm$ 0.66 \\
\method & \textbf{69.92 $\pm$ 0.35} & \textbf{68.91 $\pm$ 0.13} & \textbf{48.19 $\pm$ 0.68} & \textbf{46.46 $\pm$ 1.02} & \textbf{61.51 $\pm$ 0.09} \\
\bottomrule
\end{tabular*}
\end{table}

\paragraph{Mathematical reasoning and transfer.}
\paperref{Table}{tab:math_main} shows that \method improves both mathematical benchmarks at both model sizes.
On Ouro-1.4B, GSM8K accuracy increases by 1.19 percentage points over LoRA, while MATH-500 rises from 38.87\% to 47.53\%, an 8.67-point gain.
On Ouro-2.6B, the corresponding gains are 1.11 and 4.33 points.
Both benchmarks improve in every training seed at each size.
The larger gains on MATH-500 show stronger transfer from grade-school training problems to competition mathematics.
This pattern suggests that regularizing the shared update helps transfer learned reasoning patterns, without external knowledge or additional supervision.
We additionally evaluate \method on LoopUS-Qwen3-4B~\citep{park2026loopus} and Huginn~\citep{geiping2025huginn}, extending the comparison to other model families in \paperref{Appendix}{app:family_comparison}.

\paragraph{Instruction tuning.}
On Ouro-1.4B, \method improves HumanEval+ by 2.24 points and TruthfulQA MC2 by 0.87 points over LoRA.
\paperref{Table}{tab:tulu_main} also shows higher means on MMLU and IFEval.
The six-benchmark average is 61.51, compared with 60.98 for LoRA.
\paperref{Appendix}{app:tulu_returned} reports all nine instruction-tuning metrics, whose average is also higher for \method than for LoRA.

\subsection{Comparison with Existing Adaptation Methods}
\label{sec:expanded_baselines}

We compare \method with alternative low-rank adaptation and regularization methods under the same mathematical training recipe and hyperparameter tuning budget.
\paperref{Table}{tab:matched_expansion} places accuracy alongside the measured cost of training.

\begin{table}[t]
\centering
\small
\setlength{\tabcolsep}{5pt}
\caption{\textbf{Comparison with existing adaptation methods.} Ouro-1.4B, rank 16, mathematical recipe. Accuracy is mean $\pm$ SD over three training seeds; training time and peak memory are their means. Displayed costs use an H100 80GB and exclude learning-rate search and evaluation. All methods use 15.1M adapter parameters.}
\label{tab:matched_expansion}
\begin{tabular}{lcccc}
\toprule
Method & GSM8K & MATH-500 & Train (min) & Memory (GB) \\
\midrule
LoRA & 85.34 $\pm$ 0.64 & 38.87 $\pm$ 1.67 & 110.6 & 45.36 \\
LoRA+ & 85.54 $\pm$ 0.23 & 38.13 $\pm$ 0.42 & 112.8 & 45.40 \\
CoTo-on-Ouro & 85.60 $\pm$ 0.76 & 37.80 $\pm$ 1.06 & 94.1 & 45.28 \\
LoRA Dropout & 85.77 $\pm$ 0.70 & 42.13 $\pm$ 0.31 & 444.9 & 45.37 \\
\method & \textbf{86.53 $\pm$ 0.44} & \textbf{47.53 $\pm$ 2.61} & 128.9 & 45.36 \\
\bottomrule
\end{tabular}
\end{table}

\paragraph{Comparison with adapter regularizers.}
CoTo-on-Ouro shares each physical layer's adapter switch across its recurrent uses and increases the active fraction during training; LoRA Dropout introduces sparsity within the low-rank update.
\method couples masks on complete loop applications with inverse-survival rescaling, tailoring regularization to the recurrent use of the shared update.
On MATH-500, it exceeds CoTo-on-Ouro by 9.73 points and LoRA Dropout by 5.40 points, with positive differences in all three training seeds against both methods.
Its GSM8K margins are 0.94 and 0.76 points, respectively.
These comparisons show that regularizing the repeated applications of the update yields stronger mathematical transfer than the evaluated alternatives.
\paperref{Appendix}{app:comparison_completion} gives the baseline configurations and per-seed results.

\paragraph{Efficiency comparison.}
\label{sec:task_scope}
\method improves mathematical accuracy at the parameter count and inference computation of standard LoRA.
Relative to LoRA Dropout, it reduces measured training time by 71\% while improving accuracy on both benchmarks.
\paperref{Appendix}{app:compute} specifies the hardware and timing procedure.

\subsection{Early-Loop Adaptation}
\label{sec:early_adaptation}

We now examine whether the task gains are accompanied by stronger early-loop adaptation, addressing the late-loop bias identified in \paperref{Section}{sec:introduction}.
We first activate a trained adapter at only one of four loops, retain all four backbone loops, and measure the reduction in final-loop teacher-forced loss relative to the same frozen model.
This intervention isolates how much the learned update reduces prediction loss when applied alone at each position, without shortening the recurrent computation.

\paragraph{Reducing the late-loop bias.}
\paperref{Figure}{fig:teaser_reduction} quantifies the imbalance after standard LoRA fine-tuning on GSM8K: \method narrows the gap in loss reduction between the first and final applications from 20.5 to 4.9 percentage points.
At the higher learning rate of the same recipe (\paperref{Appendix}{app:single_occurrence}), LoRA's loss reduction falls from 37.0\% at the fourth loop to 11.3\% at the first, while \method retains 33.8--38.4\% across positions.
The improvement is largest at the earliest loop, strengthening the applications that standard fine-tuning leaves least effective.

\paragraph{Generation from individual applications.}
After MetaMath-GSM fine-tuning, we evaluate GSM8K generation with the adapter active at one loop while retaining all four backbone loops.
\paperref{Table}{tab:single_loop_generation} compares all four activation positions.
At the second loop, \method reaches 64.72\% accuracy, compared with 0.30\% for LoRA; at the third and fourth loops, it reaches 86.23\% and 86.18\%, compared with 2.63\% and 6.07\%.
These results show that the update trained with Loop Dropout can support generation with a single application at loops two through four.

\begin{table}[t]
\centering
\small
\setlength{\tabcolsep}{5pt}
\caption{\textbf{GSM8K generation with one adapter application.} Ouro-1.4B after MetaMath-GSM fine-tuning; accuracy in \%, mean $\pm$ SD over three seeds. Only the indicated adapter application is enabled, with all four backbone loops retained.}
\label{tab:single_loop_generation}
\begin{tabular}{lcccc}
\toprule
Method & Only loop 1 & Only loop 2 & Only loop 3 & Only loop 4 \\
\midrule
LoRA & 0.03 $\pm$ 0.04 & 0.30 $\pm$ 0.46 & 2.63 $\pm$ 3.07 & 6.07 $\pm$ 4.42 \\
\method & 1.31 $\pm$ 0.69 & 64.72 $\pm$ 13.56 & 86.23 $\pm$ 0.84 & 86.18 $\pm$ 0.78 \\
\bottomrule
\end{tabular}
\end{table}

With every adapter application enabled, the readout diagnostic in \paperref{Appendix}{app:metamath_diagnostics} also shows lower teacher-forced loss at every MATH-500 readout, most strongly at the first loop.
This connects stronger early-loop prediction to the normal adapted computation.

\FloatBarrier

\subsection{Ablation Studies}
\label{sec:analysis}

The ablations examine the contributions of loop-level masking and inverse-survival rescaling, and how their benefit depends on sharing the update.

\paragraph{Training controls.}
\label{sec:mask_controls}
\paperref{Table}{tab:mask_controls} compares training rules for the same shared adapter to examine rescaling, variation across loops, masking granularity and matched perturbations.
Each rule retains all four backbone loops and uses the full adapter update at inference.
We define each training rule below using its table row name:

\begin{description}
\item[LoRA.] The shared adapter is active at every loop with unit gate, providing the unmasked reference.

\item[Unscaled.] We keep the loop-level Bernoulli masks but remove inverse-survival rescaling, using $g_t=b_t$ instead of $b_t/q$.
This tests the role of preserving expected update strength.

\item[Dose control.] We replace the sampled Loop Dropout gates by their mean across loops: for example, $[0,2,0,2]$ becomes $[1,1,1,1]$.
This preserves their sum while removing loop-to-loop variation, testing the role of different application patterns beyond a varying common strength.

\item[Module-wise.] We sample a separate rescaled mask for each adapted linear map at each loop.
This tests masking individual maps against removing a complete adapter application.

\item[Low-rank weight noise.] We add a random low-rank matrix perturbation to each learned adapter update.
This tests random weight perturbations with the same expected energy as \method.

\item[Parallel noise.] We add scalar noise along the learned update, sharing the scalar across adapted maps and tokens within a loop.
The gates match the mean and covariance of the Loop Dropout gates but permit negative values and have no probability mass at zero, testing whether matching the first two moments reproduces the benefit of discrete masking.

\item[Loop Dropout.] One rescaled gate $b_t/q$ is shared across all adapted maps and tokens within each loop, removing complete adapter applications while preserving expected update strength.
\end{description}

Both noise controls use unit strength $c=1$ for the stated matching conditions; \paperref{Appendix}{app:implementation} gives their precise distributions.

\begin{table}[htbp]
\centering
\small
\caption{\textbf{Ablating loop-level masking and inverse-survival rescaling.} The controls separate update scaling, gate variation across loops, masking granularity and matched noise. All rows use shared rank-16 adapters on Ouro-1.4B, retain four backbone loops and apply the full adapter update at inference. MetaMath-GSM recipe, learning rate $10^{-4}$; accuracy in \%, mean $\pm$ SD over three seeds. Noise strength $c=1$ matches the expected perturbation energy of \method. Per-seed results are in \paperref{Tables}{tab:control_seeds} and~\papernumref{tab:returned_seeds}.}
\label{tab:mask_controls}
\begin{tabular}{lcc}
\toprule
Training rule & GSM8K & MATH-500 \\
\midrule
LoRA & 85.32 $\pm$ 0.92 & 40.13 $\pm$ 1.62 \\
Unscaled & 84.71 $\pm$ 0.19 & 39.33 $\pm$ 1.70 \\
Dose control & 85.57 $\pm$ 0.88 & 38.60 $\pm$ 1.71 \\
Module-wise & 85.52 $\pm$ 0.62 & 41.13 $\pm$ 0.76 \\
Low-rank weight noise & 85.70 $\pm$ 0.54 & 39.13 $\pm$ 0.83 \\
Parallel noise & 85.62 $\pm$ 0.64 & 37.93 $\pm$ 1.33 \\
\method & \textbf{87.57 $\pm$ 0.55} & \textbf{44.00 $\pm$ 1.40} \\
\bottomrule
\end{tabular}
\end{table}

\paragraph{Effects of the training rule.}
\method achieves the highest accuracy on both benchmarks in \paperref{Table}{tab:mask_controls}.
Coupling the loop masks with rescaling improves over unscaled masking, and masking complete applications gives higher MATH-500 accuracy than independent module-wise masks.
The gain over the dose control supports varying which loops apply the update beyond varying their common strength.
\method also outperforms both matched noise controls; the additional configurations in \paperref{Table}{tab:returned_seeds} retain this ordering.
Together, these comparisons support training across combinations of complete applications while preserving expected update strength.

\paragraph{Sharing and masking.}
\label{sec:sharing_mask}
\paperref{Table}{tab:sharing_mask} examines how the gains from masking depend on whether the update is shared across loops.
For each adapted linear map, \textbf{Shared} reuses the same LoRA factors $(A,B)$ at all four loops, whereas \textbf{Independent} learns a separate pair $(A_t,B_t)$ for each loop $t$.
Both settings keep the backbone weights shared and frozen.
Independent adapters can thus specialize their updates to individual loops.

We compare two independent-adapter ranks: rank four per loop matches the shared rank-16 adapter's total parameter count (15.1M), while rank sixteen per loop matches its per-loop capacity and uses 60.6M parameters in total.
The \textbf{Without loop mask} columns activate every adapter application during training; the \textbf{With loop mask} columns apply the same complete-application masks and inverse-survival rescaling as \method to the shared or independent adapters.
Each row compares these training rules under the same hyperparameter candidate budget, with all adapters active at inference.

\begin{table}[htbp]
\centering
\footnotesize
\caption{\textbf{Sharing $\times$ masking on Ouro-1.4B.} Shared adapters reuse their parameters across all four loops; Independent adapters learn separate parameters for each loop. Rank is per loop, and Params counts all trainable adapter parameters. With loop mask uses complete-application masking and inverse-survival rescaling during training; all adapters are active at inference. MetaMath-GSM recipe with development selection over five learning rates per method; accuracy in \% for training seed 101.}
\label{tab:sharing_mask}
\begin{tabular}{lrc cccc}
\toprule
& & & \multicolumn{2}{c}{Without loop mask} & \multicolumn{2}{c}{With loop mask} \\
\cmidrule(lr){4-5}\cmidrule(lr){6-7}
Adapters & Rank & Params & GSM8K & MATH-500 & GSM8K & MATH-500 \\
\midrule
Shared & 16 & 15.1M & 84.69 & 37.00 & 87.04 & 48.40 \\
Independent & 4 & 15.1M & 84.08 & 39.60 & 87.57 & 44.40 \\
Independent & 16 & 60.6M & 84.53 & 39.40 & 86.43 & 43.80 \\
\bottomrule
\end{tabular}
\end{table}

Masking improves both benchmarks for all three adapter settings.
On MATH-500, the gain is 11.40 points with shared adapters, compared with 4.80 and 4.40 points with independent rank-four and rank-sixteen adapters.
The shared row uses the independently tuned configurations in \paperref{Table}{tab:matched_expansion}; \paperref{Appendix}{app:sharing_settings} specifies the training and evaluation settings for this comparison.

The larger-model comparison in \paperref{Table}{tab:ouro26_tuned} also distinguishes sharing from adapter capacity.
On Ouro-2.6B, \method exceeds independently tuned per-loop rank-four and rank-sixteen adapters by 3.60 and 5.80 points on MATH-500, respectively.
Allocating separate updates to the loops therefore does not recover the transfer performance of the shared update trained with Loop Dropout.

\paragraph{Additional training variants.}
\paperref{Table}{tab:variants} compares fixed-size masks, positional profiles, schedules and learned gates under the direct GSM8K recipe.
No variant exceeds uniform Bernoulli masking at both learning rates, so we use uniform masks without an additional schedule or learned gates.
\paperref{Appendix}{app:extended_ablations} reports these variants in full.

\FloatBarrier
\subsection{Robustness Across Adapter Ranks and Recurrence Depths}
\label{sec:model_generalization}

We test whether the benefits of regularizing shared adaptation extend across adapter capacities and beyond the training recurrence depth.
The rank study uses the MetaMath-GSM recipe, and the depth study uses the direct GSM8K recipe.

\paragraph{Gains across ranks.}
With the same tuning budget for each method and rank, \method improves mean accuracy on both mathematical benchmarks at all four ranks in \paperref{Figure}{fig:rank_overview}; \paperref{Table}{tab:rank_tuned} gives the numerical results.
MATH-500 gains are 7.07, 8.67, 2.87 and 2.33 points at ranks 4, 16, 64 and 128, respectively.
At rank 16, the gain is positive in all three seeds and across all seven subject means.
The decomposition in \paperref{Appendix}{app:rank_tuned} attributes 7.87 points of this 8.67-point gain to problems where both methods produce extractable answers.
The transfer benefit therefore persists across adapter capacities after tuning each method independently.

\begin{figure}[!htbp]
\centering
\begin{subfigure}[b]{0.465\textwidth}
\centering
\captionsetup{justification=centering}
\begin{tikzpicture}
\begin{axis}[scale only axis,width=5.05cm,height=3.2cm,
 xmin=83.8,xmax=87.4,ymin=0.5,ymax=4.5,
 xtick={84,85,86,87},ytick={1,2,3,4},yticklabels={128,64,16,4},
 xlabel={Accuracy (\%)},ylabel={Adapter Rank},
 legend columns=2,
 legend style={at={(0.5,1.035)},anchor=south,inner sep=1pt,column sep=4pt},
 ymajorgrids=false,xmajorgrids=true,
 error bars/y dir=none,error bars/x dir=both,error bars/x explicit,
 error bars/error mark options={rotate=0,mark size=1.5pt,line width=0.55pt}]
\foreach \yb/\xl/\xr in {4/85.82/86.38,3/85.34/86.53,2/85.44/85.87,1/86.10/86.48}{
\edef\next{\noexpand\draw[FigOutline,line width=0.6pt] (axis cs:\xl,\yb+0.10)--(axis cs:\xr,\yb-0.10);}\next}
\addplot+[basecurve,only marks] coordinates {(85.82,4.10)+-(0.20,0) (85.34,3.10)+-(0.64,0) (85.44,2.10)+-(1.12,0) (86.10,1.10)+-(0.29,0)};
\addlegendentry{LoRA}
\addplot+[ourscurve,only marks] coordinates {(86.38,3.90)+-(0.50,0) (86.53,2.90)+-(0.44,0) (85.87,1.90)+-(0.48,0) (86.48,0.90)+-(0.22,0)};
\addlegendentry{Loop Dropout}
\end{axis}
\end{tikzpicture}
\caption{GSM8K}
\label{fig:rank_gsm8k}
\end{subfigure}\hspace{0.02\textwidth}%
\begin{subfigure}[b]{0.465\textwidth}
\centering
\captionsetup{justification=centering}
\begin{tikzpicture}
\begin{axis}[scale only axis,width=5.05cm,height=3.2cm,
 xmin=35,xmax=51,ymin=0.5,ymax=4.5,
 xtick={35,40,45,50},ytick={1,2,3,4},yticklabels={128,64,16,4},
 xlabel={Accuracy (\%)},ylabel={Adapter Rank},
 legend columns=2,
 legend style={at={(0.5,1.035)},anchor=south,inner sep=1pt,column sep=4pt},
 ymajorgrids=false,xmajorgrids=true,
 error bars/y dir=none,error bars/x dir=both,error bars/x explicit,
 error bars/error mark options={rotate=0,mark size=1.5pt,line width=0.55pt}]
\foreach \yb/\xl/\xr in {4/39.00/46.07,3/38.87/47.53,2/37.87/40.73,1/39.27/41.60}{
\edef\next{\noexpand\draw[FigOutline,line width=0.6pt] (axis cs:\xl,\yb+0.10)--(axis cs:\xr,\yb-0.10);}\next}
\addplot+[basecurve,only marks] coordinates {(39.00,4.10)+-(1.25,0) (38.87,3.10)+-(1.67,0) (37.87,2.10)+-(1.70,0) (39.27,1.10)+-(2.00,0)};
\addlegendentry{LoRA}
\addplot+[ourscurve,only marks] coordinates {(46.07,3.90)+-(1.17,0) (47.53,2.90)+-(2.61,0) (40.73,1.90)+-(1.42,0) (41.60,0.90)+-(1.39,0)};
\addlegendentry{Loop Dropout}
\end{axis}
\end{tikzpicture}
\caption{MATH-500}
\label{fig:rank_math}
\end{subfigure}
\caption{\textbf{Mathematical accuracy across adapter ranks.}
Ouro-1.4B after MetaMath-GSM fine-tuning. Markers and horizontal bars show mean $\pm$ SD over three seeds. Both methods use the same five-rate search budget at each rank, with $\alpha/r=2$; \paperref{Table}{tab:rank_tuned} gives the values.}
\label{fig:rank_overview}
\end{figure}

\paragraph{Zero-shot generalization to deeper recurrence.}
\label{sec:depth_sensitivity}

We evaluate zero-shot generalization beyond the training depth by applying saved adapters at additional loops without further fine-tuning.

\begin{figure}[t]
\centering
\begin{minipage}{0.50\textwidth}
\centering
\captionsetup{justification=raggedright,singlelinecheck=false}
\begin{tikzpicture}[x=1.08cm,y=1.08cm,text=FigInk]
\foreach \x/\y/\p/\val/\tc/\eq in {
  0/2/37.91/+1.90/FigInk/1, 1/2/59.64/+2.98/white/0, 2/2/95.02/+4.75/white/0,
  0/1/11.62/+0.58/FigInk/0, 1/1/21.23/+1.06/FigInk/1, 2/1/54.59/+2.73/FigInk/0,
  0/0/61.66/+3.08/white/0, 1/0/35.89/+1.79/FigInk/0, 2/0/10.11/+0.51/FigInk/1} {
  \fill[FigHeatHi!\p!FigHeatLo] (\x,\y) rectangle (\x+1,\y+1);
  \draw[white,line width=1.6pt] (\x,\y) rectangle (\x+1,\y+1);
  \ifnum\eq=1
    \draw[FigInk,line width=0.6pt] (\x+0.03,\y+0.03) rectangle (\x+0.97,\y+0.97);
  \fi
  \node[text=\tc,font=\small] at (\x+0.5,\y+0.5) {$\val$};
}
\foreach \y/\lab in {2/4,1/6,0/8} {
  \node[anchor=east,font=\small] at (-0.10,\y+0.5) {\lab};
}
\foreach \x/\lab in {0/4,1/6,2/8} {
  \node[anchor=north,font=\small] at (\x+0.5,-0.06) {\lab};
}
\node[anchor=north,font=\small] at (1.5,-0.39) {Evaluation Depth};
\node[rotate=90,font=\small] at (-0.62,1.5) {Training Depth};
\shade[bottom color=FigHeatLo,top color=FigHeatHi] (3.22,0) rectangle (3.40,3);
\node[anchor=west,font=\footnotesize] at (3.44,0) {0};
\node[anchor=west,font=\footnotesize] at (3.44,3) {5};
\end{tikzpicture}
\caption{\textbf{Depth transfer on Ouro.} Loop Dropout gains over LoRA on GSM8K (percentage points). Outlined cells match training and evaluation depth.}
\label{fig:depth_sensitivity}
\label{fig:depth_gains}
\end{minipage}
\end{figure}

For adapters trained at four loops, \method exceeds LoRA by 2.98 and 4.75 percentage points on GSM8K when evaluated at six and eight loops, respectively.
At eight loops, \method achieves 75.6\% accuracy, compared with 70.8\% for LoRA.
\paperref{Figure}{fig:depth_sensitivity} shows the advantage growing from 1.90 points at the training depth of four loops to 4.75 points at eight loops.
Adapters trained at six loops also give a 2.73-point gain over LoRA when evaluated at eight.

\method achieves higher mean accuracy in all nine training--evaluation depth pairs, covering both matched and shifted recurrence depths.
These results connect training across application patterns with stronger generalization beyond the depth used in fine-tuning.
\paperref{Appendix}{app:depth_matrix} gives the absolute accuracies, uncertainty and evaluation-depth curves.

\FloatBarrier
\section{Conclusion}
\label{sec:conclusion}

Adapting looped language models requires a shared update that remains effective as hidden states evolve across the recurrence.
We identify a pronounced late-loop bias in standard LoRA fine-tuning and introduce \method to promote adaptation across loop positions.
The method couples stochastic masking of complete adapter applications with inverse-survival rescaling, exposing the shared update to varying application patterns while preserving its expected strength.
Every backbone loop remains active, and inference follows standard LoRA without additional trainable parameters or computation.

Experiments demonstrate improved mathematical adaptation and transfer across model sizes, adapter ranks and training recipes, with the largest gains on transfer to competition mathematics.
Comparisons with LoRA variants, adapter regularizers and matched perturbations support the proposed training design.
Further analysis shows stronger early-loop adaptation and an advantage over LoRA when adapters are evaluated beyond their training depth.
These findings highlight effective adaptation across loops as a key consideration for fine-tuning looped language models, and establish \method as a simple way to train reusable shared updates.

\clearpage
\subsection*{AI Use Statement}
\label{sec:ai_use}

We used ChatGPT-6 Astra and Claude Fable 5.1 to assist with writing and to identify potentially relevant work.
All details were manually verified by the authors, who take full responsibility for the content of the paper.

\subsection*{Ethics Statement}
\label{sec:ethics}

This study uses publicly released models and existing benchmark datasets and does not recruit human participants or collect new user data.
The adapted models inherit risks associated with their pretrained weights and fine-tuning data.
Deployment requires application-specific reliability and safety evaluation beyond the benchmarks studied here.

\bibliographystyle{iclr2027_conference}
\bibliography{reference}

\clearpage
\appendix
\section*{Appendix}
\section{Related Work}
\label{app:related_work}

\subsection{Parameter Sharing and Recurrent Computation}

\paragraph{Architectures and latent reasoning.}
Universal Transformers, recurrently stacked translation models and ALBERT reuse parameters across depth~\citep{dehghani2019universal,dabre2019recurrent,lan2020albert}.
Deep equilibrium models instead define representations through a fixed point of a shared transformation~\citep{bai2019deq}.
Huginn and Ouro bring recurrent depth to pretrained language models~\citep{geiping2025huginn,zhu2025ouro}, while theoretical studies characterize the expressive benefits of looping and loop-index encoding~\citep{saunshi2025looped,xu2025expressive}.
Recurrence can also be introduced into existing pretrained models: \citet{mcleish2025retrofitting} use a curriculum of recurrence depths, and LoopUS combines block decomposition with selective gates, random deep supervision and adaptive exiting~\citep{park2026loopus}.
Coconut provides a related form of latent computation by feeding a model's final hidden state back as the next input embedding~\citep{hao2025coconut}.
These approaches establish several ways to reuse computation; our study takes the pretrained recurrence as given and trains a shared downstream update within it.

\paragraph{Adaptive depth and recurrent training.}
Adaptive Computation Time and PonderNet learn how much computation to perform before producing a prediction~\citep{graves2016act,banino2021pondernet}.
Depth-Adaptive Transformers make predictions at different layers, using untied layers rather than repeatedly applying one block~\citep{elbayad2020depth}.
For looped models, LoopFormer aligns trajectories of different lengths through shortcut consistency, while Think-at-Hard combines selective iteration with depth-aware LoRA modules for hard-token refinement~\citep{jeddi2026loopformer,fu2025thinkathard}.
These choices alter the computation budget, the depth-dependent transformation, or both.
Loop Dropout keeps every backbone loop active and uses the same adapter at every loop during inference; its randomization concerns the adapter's training-time participation.

Recent work also changes the training of recurrent dynamics directly.
STARS combines random loop sampling with Jacobian spectral-radius regularization~\citep{yang2026stars}.
\citet{viakhirev2026shallow} relate depth extrapolation to finite-time dynamics, studying fixed-point training on small reasoners and a separate latent-anchoring LoRA objective for Huginn.
LoopRPT applies reinforcement signals to latent steps using an exponential-moving-average teacher and noisy latent rollouts~\citep{tang2026looprpt}.
Our training rule uses the ordinary target-token loss and randomizes complete applications of a shared low-rank update, without an auxiliary state loss or teacher.
Our depth-generalization experiments evaluate the learned update at recurrence depths beyond those used in fine-tuning.

\subsection{Parameter-Efficient Adaptation}

\paragraph{What is trained.}
Parameter-efficient fine-tuning can add bottleneck adapters~\citep{houlsby2019adapters}, optimize continuous prefixes or input prompts~\citep{li2021prefix,lester2021prompt}, or learn vectors that scale activations~\citep{liu2022ia3}.
LLM-Adapters evaluates several adapter families for language-model reasoning tasks~\citep{hu2023llmadapters}.
LoRA represents weight updates with trainable low-rank factors~\citep{hu2022lora}, allowing the learned update to be merged into the frozen weights.
AdaLoRA distributes a limited rank budget according to the importance of weight updates, QLoRA trains adapters through a quantized frozen backbone, and VeRA learns scaling vectors over shared frozen random factors~\citep{zhang2023adalora,dettmers2023qlora,kopiczko2024vera}.
These methods change the parameter or memory budget of adaptation.
We hold the LoRA parameterization fixed and study how its update is trained across repeated applications.

\paragraph{How the factors are optimized.}
LoRA+ assigns different learning rates to the two factors, rsLoRA changes their rank-dependent scaling, and DoRA separates weight magnitude and direction~\citep{hayou2024loraplus,kalajdzievski2023rslora,liu2024dora}.
PiSSA initializes trainable factors from principal singular components of pretrained weights, whereas LoRA-GA uses gradient information to initialize the adapter~\citep{meng2024pissa,wang2024loraga}.
LoRA-Pro modifies the low-rank optimization to approximate full fine-tuning updates~\citep{wang2025lorapro}.
LoRA and full fine-tuning can also differ in fitting and retention~\citep{biderman2024lora}, while learning-rate sensitivity can change comparisons among adapters~\citep{lee2026lrmatters}.
Our experiments compare learning rates, ranks, independent per-loop factors and LoRA variants.
Loop Dropout retains the LoRA parameterization and couples masking of shared adapter applications with inverse-survival rescaling.

\subsection{Stochastic Regularization}

\paragraph{The object and granularity of masking.}
Dropout masks activations and DropConnect masks individual weights~\citep{srivastava2014dropout,wan2013dropconnect}.
Stochastic depth removes residual branches during training, while LayerDrop applies structured layer removal to Transformers and supports extracting shallower networks for inference~\citep{huang2016stochastic,fan2020layerdrop}.
Loop Dropout masks the adaptation branch at a loop while retaining the complete pretrained transformation.
Thus the model's depth stays fixed even when the learned update is absent from some loops.
The module-wise and dose controls test whether coupling the adapted modules through one loop-level mask matters beyond perturbing individual modules or changing the total update strength.

Mask dependence across repeated computation is also a central issue in recurrent dropout.
\citet{gal2016recurrent} reuse a dropout mask across sequence time steps, and \citet{semeniuta2016recurrent} drop recurrent candidate updates in a way designed to preserve long-term memory.
Our recurrence is over depth for the same token sequence.
We sample independently across loops, share each mask across modules and token positions, and apply it only to the fine-tuning update.
This makes the mask's scope and dependence structure different from dropout on the recurrent hidden state or on the full recurrent weights.

\paragraph{Regularizing pretrained adapters.}
Mixout regularizes adaptation toward pretrained parameters~\citep{lee2020mixout}.
AdapterDrop removes adapter layers, Adapters Strike Back applies stochastic depth to adapter branches, and LoRA Dropout introduces sparsity into low-rank updates~\citep{ruckle2021adapterdrop,steitz2024adapters,lin2024loradropout}.
CoTo progressively increases adapter activation probabilities and studies layer-wise contributions and optimization~\citep{zhuang2025coto}.
These methods regularize the spatial structure or training schedule of adapters.
Loop Dropout trains different occurrences of the same shared update in different combinations, with inverse-survival scaling and all occurrences active at inference.
Untying the per-loop adapters changes which parameters each mask exposes to training, which motivates our equal-rank and equal-parameter comparisons.

\paragraph{Noise and iterative state perturbations.}
Classical noise regularization and analyses of dropout connect small perturbations to local sensitivity penalties~\citep{bishop1995noise,wager2013dropout}.
Our gate-space expansion has this interpretation locally, whereas the finite-mask mixture in \papereqref{Eq.}{eq:mixture} is exact.
Diffusion models provide another setting in which parameters are reused over evolving states: TimeStep Master allocates timestep LoRA experts, and T-LoRA adapts the rank to the denoising timestep~\citep{zhuang2025tsm,soboleva2025tlora}.
Input perturbation changes the states seen during diffusion training to address a training--sampling mismatch~\citep{ning2023input}.
In our unrolled computation, removing an earlier application changes the model-produced state received by later applications of the same update.
The matched-noise and training-rescaling controls test specific alternatives directly.

\subsection{Supervision and Evaluation}

Instruction adaptation has been studied through human demonstrations and preference feedback~\citep{ouyang2022instructgpt}, model-generated instructions~\citep{wang2023selfinstruct}, and diverse task mixtures~\citep{longpre2023flan,ivison2023tulu2}.
For mathematical adaptation, MetaMath constructs additional training questions from existing mathematical data~\citep{yu2024metamath}.
Our two settings use existing supervised recipes and vary the update training rule within each recipe.
The evaluation keeps mathematical accuracy~\citep{cobbe2021gsm8k,hendrycks2021math,lightman2024verify} separate from code correctness, truthfulness and instruction following~\citep{chen2021codex,liu2023evalplus,austin2021program,lin2021truthfulqa,zhou2023ifeval}.
\paperref{Appendix}{app:evaluation_protocols} specifies the prompts, parsers and scoring rules for each setting.

\section{Implementation Details}
\label{app:implementation}

\paragraph{Adapter placement.}
The Ouro recurrent block contains decoder layers with four attention projections (query, key, value and output) and three feed-forward projections (gate, up and down).
Unless a control specifies otherwise, all seven projections receive shared LoRA factors, while embeddings, normalization and the output head remain frozen.
The default rank is 16 and $\alpha=32$, giving 15{,}138{,}816 trainable parameters for Ouro-1.4B and twice that number for Ouro-2.6B.
The rank sweep keeps $\alpha/r=2$; rsLoRA instead uses $\alpha/\sqrt r$.
We initialize $A$ with Kaiming-uniform weights and $B$ to zero, store adapter parameters in float32, and cast their forward computation to the activation dtype.

\paragraph{Mask sampling.}
At each micro-batch, the implementation draws a $K\times B$ mask for the $B$ examples and exposes the current loop to every adapted module.
Each module multiplies its adapter output by the corresponding gate, broadcasting over token positions.
A dropped application therefore removes all LoRA contributions from one loop while retaining its frozen operations.
The gate is one at every loop during evaluation.
A unit-gate shared adapter can be merged into the pretrained matrices in the usual way.

\paragraph{Sharing, scale and granularity controls.}
The unscaled control uses $g=b$.
Per-loop adapters use a separate pair of factors for each loop, either at rank 16 per loop or at rank 4 to match the shared rank-16 parameter count.
The dose control draws the same mask as Loop Dropout and sets every gate of an example to $K^{-1}\sum_tg_t$.
It preserves the sampled gate sum, but removes loop-to-loop variation; the all-zero mask disables every application simultaneously.
Module-wise masking draws independent masks for each adapted linear map and loop.
The standard input-dropout baseline applies probability 0.1 dropout to the adapter input; it is distinct from the sparsity method named LoRA Dropout in \citet{lin2024loradropout}.

\paragraph{Matched noise controls.}
Both noise controls draw an event $a_t\sim\Bernoulli(p)$ per example and loop, with event draws paired to the removal events of Loop Dropout.
Parallel noise uses
\begin{equation}
g_t=1+a_t c\xi_t/\sqrt q,\qquad \xi_t\sim\mathcal N(0,1),
\end{equation}
where the scalar is shared across modules and tokens at that loop.
At $c=1$, its mean and covariance equal those of the Loop Dropout gate.
Low-rank weight noise instead perturbs each adapted matrix by
\begin{equation}
\widetilde\Delta_i=\Delta_i+
\frac{a_t c\,\operatorname{stopgrad}(\|\Delta_i\|_F)}{\sqrt{q\,d_{\mathrm{out}}d_{\mathrm{in}}r_n}}
G_{\mathrm{out}}G_{\mathrm{in}},
\end{equation}
where the two independent standard-Gaussian factors have shapes $d_{\mathrm{out}}\times r_n$ and $r_n\times d_{\mathrm{in}}$, with $r_n=16$.
Its expected perturbation energy is $c^2(p/q)\|\Delta_i\|_F^2$, matching Loop Dropout at $c=1$.
This is a random low-rank perturbation, not independent dense Gaussian noise on every weight.

\paragraph{Other masks and random-number streams.}
The structured variants use nonuniform loop probabilities, exactly sized subsets, random prefixes or suffixes, training schedules, or sensitivity-dependent probabilities, with rescaling by each loop's retention probability.
Initialization and data order are fixed before masks are sampled, and masking and noise use separate random streams, so paired same-rank comparisons start from identical adapters as specified in \paperref{Appendix}{app:hyperparameters}.

\section{Update Scaling and a Local Curvature View}
\label{app:rescale_analysis}

\subsection{First-Order Consistency Through the Recurrence}
\label{app:first_order_rescale}

\begin{proof}[Proof of \paperref{Lemma}{lem:first_order_rescale}]
Fix the input, $\Delta$, $K$ and $q$ as in the lemma, and write
\begin{equation}
h_t^\eta(g)=F\bigl(h_{t-1}^\eta(g);W+\eta g_t\Delta\bigr),
\qquad z_\eta(g)=\mathrm{Head}\bigl(h_K^\eta(g)\bigr),
\end{equation}
with $h_0$ independent of $\eta$ and $g$.
At $\eta=0$, all gate vectors give the same frozen trajectory $h_t^0$ and logits $z_0$.
Define the state Jacobian $J_t=D_hF(h_{t-1}^0;W)$, the directional weight derivative $d_t=D_WF(h_{t-1}^0;W)[\Delta]$, and the head Jacobian $P=D\mathrm{Head}(h_K^0)$.
Differentiating the recurrence at $\eta=0$ gives
\begin{equation}
\dot h_t(g)=J_t\dot h_{t-1}(g)+g_t d_t,
\qquad \dot h_0(g)=0.
\end{equation}
Unrolling this identity yields
\begin{equation}
\left.\frac{\partial z_\eta(g)}{\partial\eta}\right|_{\eta=0}
=\sum_{t=1}^{K}g_t u_t,
\qquad u_t=PJ_K\cdots J_{t+1}d_t,
\label{eq:propagated_update}
\end{equation}
where the product is the identity for $t=K$.
The vector $u_t$ is the final-logit response to inserting the shared update at loop $t$, including its propagation through the remaining backbone loops.
The same $\Delta$ appears in every $d_t$, while the state and propagation Jacobians can vary with $t$.

The smoothness assumptions and finite depth make the composed output twice continuously differentiable near the frozen computation.
Consequently,
\begin{equation}
z_\eta(g)=z_0+\eta\sum_{t=1}^{K}g_tu_t+O(\eta^2).
\end{equation}
For fixed $q>0$, the mask support is finite, so the remainder is uniform over the gate vectors $b/q$, $b$ and $\ones$.
Taking expectations and using $\E[b_t/q]=1$ and $\E[b_t]=q$ gives
\begin{equation}
\begin{aligned}
\E_b\,z_\eta(b/q)&=z_0+\eta\sum_{t=1}^{K}u_t+O(\eta^2),\\
\E_b\,z_\eta(b)&=z_0+q\eta\sum_{t=1}^{K}u_t+O(\eta^2).
\end{aligned}
\end{equation}
Comparing these expressions with the expansions of $z_\eta(\ones)$ and $z_{q\eta}(\ones)$ proves \papereqref{Eq.}{eq:first_order_rescale}.

An explicit remainder bound follows by writing $Z(a)$ for the final logits when loop $t$ uses $W+a_t\Delta$, so that $z_\eta(g)=Z(\eta g)$.
Choose a sufficiently small closed ball $U$ about zero within the region of twice continuous differentiability.
There is a finite $M$ such that $\|D^2Z(a)[v,v]\|_2\leq M\|v\|_2^2$ for all $a\in U$ and vectors $v$.
For $\eta$ small enough that the masked coefficients and their means lie in $U$, Taylor expansion around each mean cancels the expected linear term and gives
\begin{equation}
\begin{aligned}
\bigl\|\E_b\,z_\eta(b/q)-z_\eta(\ones)\bigr\|_2
&\leq\frac{M\eta^2}{2}\E\|b/q-\ones\|_2^2
=\frac{MKp}{2q}\eta^2,\\
\bigl\|\E_b\,z_\eta(b)-z_{q\eta}(\ones)\bigr\|_2
&\leq\frac{M\eta^2}{2}\E\|b-q\ones\|_2^2
=\frac{MKpq}{2}\eta^2.
\end{aligned}
\label{eq:rescale_remainder_bound}
\end{equation}
The constants can depend on the fixed input, update and depth; the expansion holds with $q$ fixed.
The bound controls the nonlinear remainder, while \papereqref{Eq.}{eq:propagated_update} identifies the first-order adaptation preserved by inverse-survival rescaling.
\end{proof}

\subsection{Exact Relations}

For a fixed example and adapter $\Delta$, let $\ell_\Delta(g)$ denote the loss under gates $g\in\R^K$.
Loop Dropout uses $g_t=b_t/q$ with independent $b_t\sim\Bernoulli(q)$ and $q=1-p$, so that $\E[g]=\ones$ and $\operatorname{Cov}(g)=\frac{p}{q}I$ as in \papereqref{Eq.}{eq:gate_decomposition}.
\papereqref{Equation}{eq:mixture} gives the exact expectation of the loss over the finite set of masks.
At $p=0.5$ and $K=4$, it averages uniformly over 16 patterns, from no active update to all four applications.
The all-zero pattern contributes a loss independent of the adapter; the other patterns provide different training contexts for its active applications.
The total gate sum $D=\sum_tg_t$ satisfies $qD\sim\operatorname{Binomial}(K,q)$, with $\E[D]=K$ and $\operatorname{Var}(D)=Kp/q$.
Thus the expected number of applications, counted with their scale, equals the $K$ applications used at inference.
At $p=1/2$ and $K=4$, its possible values are $0,2,4,6,8$ with probabilities $(1,4,6,4,1)/16$.
The mean-preserving property is defined in parameter space; the recurrence retains its nonlinear dependence on the gate vector.

The unscaled rule has mean $q\ones$ and covariance $pqI$.
For any adapter $\Delta$ and mask $b\in\{0,1\}^K$,
\begin{equation}
\ell_{\Delta/q}(b)=\ell_{\Delta}(b/q),
\label{eq:reparam}
\end{equation}
so the two rules parameterize the same family of masked computations.
Let $\Delta_N$ and $\Delta_U$ denote the unscaled and rescaled adapters.
By \papereqref{Eq.}{eq:reparam}, the corresponding parameters $\Delta_N=\Delta_U/q$ represent the same distribution of masked forward computations, $\ell_{\Delta_N}(b)=\ell_{\Delta_U}(b/q)$ for every $b$.
This identity describes corresponding parameterizations; optimization additionally depends on the chosen factor initialization and learning rates.
At unit-gate inference, the two corresponding updates differ by $1/q$; evaluating $\Delta_N$ with gate $q$ restores the same effective update as evaluating $\Delta_U$ with gate one.
The constant can thus be applied during training or folded into the adapter at inference; \method applies it during training so that inference uses the unit gate of standard LoRA without a calibration constant, whereas the unscaled control in \paperref{Table}{tab:mask_controls} trains and evaluates without it.
Unscaled training samples its own inference gate with probability $q^K$; rescaled training instead centers its gate distribution at the inference gate.

\subsection{Second-Order Approximation}

To interpret sensitivity near the default inference gate, write $g=\ones+\varepsilon$, with $\E[\varepsilon]=0$ and covariance $C$.
A Taylor expansion gives
\begin{equation}
\E\,\ell_\Delta(\ones+\varepsilon)
=\ell_\Delta(\ones)
+\frac12\tr\!\left(C\nabla_g^2\ell_\Delta(\ones)\right)+R,
\label{eq:expansion}
\end{equation}
where $R$ collects the expected higher-order terms.
Let $v_t=\left.\partial z/\partial g_t\right|_{g=\ones}$ be the local logit sensitivity to gate $t$, let $J=[v_1,\ldots,v_K]$, and let $H$ be the cross-entropy Hessian with respect to the stacked target-token logits, with the same averaging convention as the loss.
The gate Hessian decomposes as
\begin{equation}
\nabla_g^2\ell_\Delta(\ones)=J^\top HJ+\sum_i\frac{\partial\ell_\Delta}{\partial z_i}\,\nabla_g^2 z_i,
\end{equation}
and substituting $C=\frac{p}{q}I$ into \papereqref{Eq.}{eq:expansion} gives \papereqref{Eq.}{eq:penalty}, with $R$ collecting the second summand and the higher-order terms.
Keeping only the Gauss--Newton part $J^\top HJ$ is the gate-space analogue of local noise-regularization analyses and the adaptive-regularization view of dropout~\citep{bishop1995noise,wager2013dropout}.
At $p=1/2$ the gate perturbations have unit magnitude, so \papereqref{Eq.}{eq:penalty} is a local description of the objective, while \papereqref{Eq.}{eq:mixture} gives its exact form.

\subsection{What the Controls Match}

Let $P_\parallel=K^{-1}\ones\ones^\top$ and $P_\perp=I-P_\parallel$.
The dose control replaces each sampled gate by $D/K$ and hence has covariance $\frac{p}{q}P_\parallel$.
Retaining exactly $m=qK$ uniformly sampled loops with gate $1/q$ gives covariance $\frac{p}{q}\frac{K}{K-1}P_\perp$.
Independent Bernoulli masks retain both components.
These covariance relations describe which local sensitivity directions enter \papereqref{Eq.}{eq:expansion}; the distributions also differ in their support and higher moments.
The exact mixture incorporates all of these distributional properties.

At unit strength, the parallel-noise control has the same mean and covariance as Loop Dropout, but includes negative and unbounded gates and lacks a point mass at zero.
The comparison in \paperref{Table}{tab:mask_controls} tests this continuous perturbation against finite masking with the same first two gate moments.

Finally, the unscaled expansion is centered at $q\ones$ and has coefficient $pq/2$.
Under the corresponding parameterizations in \papereqref{Eq.}{eq:reparam}, the derivative scaling compensates for the difference between this coefficient and $p/(2q)$.
The expansion therefore describes the local objective in each parameterization, with its sensitivities evaluated at the stated adapter and gate.

\section{Experimental Details}
\label{app:experimental_details}

\subsection{Models, Data and Splits}
\label{app:models_data}

\paragraph{Models.}
Ouro-1.4B and Ouro-2.6B have recurrent blocks of 24 and 48 decoder layers, respectively, with hidden size 2048, 16 attention heads and vocabulary size 49{,}152.
We use the base checkpoints with four loops unless a depth experiment specifies otherwise.
The backbone uses bfloat16 and scaled-dot-product attention, and adaptive exit gates are disabled.
Mathematical prompts do not use a chat template; instruction tuning uses the T\"ulu~2 turn format.
Batched generation uses left padding with an attention mask that covers the recurrent cache.

\paragraph{MetaMath-GSM-100k.}
Following the data-processing recipe of LoRA-Pro~\citep{wang2025lorapro}, we retain MetaMathQA examples whose type contains ``GSM'', remove examples of at least 512 tokens under the Ouro tokenizer, and take the first 100{,}000 in source order for training.
A further 500 GSM-type MetaMathQA examples, disjoint from the training examples, form the development split used for hyperparameter selection.
Inputs use the Alpaca instruction template without an input field; targets are the reference response followed by the end-of-sequence token.
We use the Ouro tokenizer and the adapter configuration and learning rates specified below.

\paragraph{Direct GSM8K.}
We train on 6{,}973 of the 7{,}473 training problems and hold out the remaining 500.
Targets contain the reference solution with calculator annotations removed, retain the final ``\texttt{\#\#\#\# N}'' line, and end with the end-of-sequence token.
This smaller recipe supports the broader learning-rate, mask and depth studies.

\paragraph{T\"ulu~2 instruction tuning.}
A fixed 100k-example draw from the mixture yields 98{,}415 trainable examples after filtering.
Only assistant tokens enter the instruction-tuning loss; the mathematical recipes likewise use target-only loss.
Examples beyond each recipe's sequence-length cap are dropped rather than truncated.
\paperref{Table}{tab:recipes} summarizes the three recipes.

\begin{table}[h]
\centering
\small
\caption{\textbf{Fine-tuning recipes.} Effective batch = micro-batch $\times$ gradient accumulation. The adapter recipes use AdamW~\citep{loshchilov2019adamw} ($\beta_1=0.9$, $\beta_2=0.999$), zero weight decay, gradient clipping at 1.0, a cosine schedule decaying to 10\% of the peak learning rate, and the final checkpoint.}
\label{tab:recipes}
\resizebox{\textwidth}{!}{%
\begin{tabular}{lccc}
\toprule
 & MetaMath-GSM-100k & GSM8K & T\"ulu~2-100k \\
\midrule
Training examples & 100{,}000 & 6{,}973 & 98{,}415 \\
Epochs / optimizer steps & 1 / 3{,}125 & 2 / $\approx$1{,}744 & 1 / 769 \\
Effective batch & $8\times4=32$ & $8\times1=8$ & $8\times16=128$ \\
Maximum sequence length & 1{,}024 & 512 & 2{,}048 \\
Warm-up & 3\% & 5\% & 3\% \\
Learning rates & See below & 3e-5 to 6e-4 & 1e-4 \\
Seeds & 101--103 & 0--4 / 101--103 & 101--103 \\
Evaluation & MetaMath evaluator & lm-eval (0-shot) & \paperref{Appendix}{app:tulu_returned} \\
\bottomrule
\end{tabular}}
\end{table}

\subsection{Hyperparameters}
\label{app:hyperparameters}

\paragraph{Adapters and seeds.}
Default adapters have rank 16 and $\alpha=32$; the independently tuned rank study uses $r\in\{4,16,64,128\}$ with $\alpha/r=2$.
LoRA+ uses a fourfold learning-rate multiplier for $B$ relative to $A$.
Studies with seeds 101--103 set the seed before attaching adapters, so paired same-rank methods share initial adapter weights and data order; the GSM8K learning-rate, depth and mask studies with seeds 0--4 use independent initializations.
Each study block is reported separately.

\paragraph{MetaMath learning rates.}
The main mathematical comparisons and the independently tuned sharing and rank studies give each method five candidates: $\{1.25,2.5,5,10,20\}\times10^{-5}$, or half these rates for the LoRA+ $A$ factor.
Each method chooses its rate by greedy-generation accuracy on the 500-example development split of \paperref{Appendix}{app:models_data} with training seed 101, resolving ties toward the smaller rate.
For the three-seed comparisons, seeds 102 and 103 are then trained at that rate.
For shared rank-16 adapters, LoRA and \method choose $2\times10^{-4}$ and $5\times10^{-5}$ on Ouro-1.4B, and $2\times10^{-4}$ and $10^{-4}$ on Ouro-2.6B.
CoTo and LoRA Dropout choose $2\times10^{-4}$ and $10^{-4}$, respectively, and LoRA+ chooses an $A$-factor rate of $5\times10^{-5}$ on Ouro-1.4B and $10^{-4}$ on Ouro-2.6B.

The controlled comparison in \paperref{Table}{tab:mask_controls} uses $10^{-4}$ for every training rule.
The two-rate studies use $\{10^{-4},2\times10^{-4}\}$ for shared adapters and the unscaled and dose controls, and $\{5\times10^{-5},10^{-4}\}$ for the LoRA+ $A$ factor.
The module-wise control uses $10^{-4}$ in these studies.
\paperref{Table}{tab:rank_results} gives the fixed-$10^{-4}$ rank study.
The noise controls use strength $c=1$ at $10^{-4}$; \paperref{Table}{tab:returned_seeds} adds a second configuration of each control at $2\times10^{-4}$, with $c=2$ for parallel noise.

\paragraph{Sharing and masking.}
\label{app:sharing_settings}
\paperref{Table}{tab:sharing_mask} compares training seed 101 for every configuration.
Both independent-adapter ranks choose LR $2\times10^{-4}$ with and without masking; shared LoRA and \method use $2\times10^{-4}$ and $5\times10^{-5}$, respectively.
All masked configurations use $p=0.5$ and inverse-survival rescaling.
The diagnostics in \paperref{Figure}{fig:early_adaptation} use shared rank-16 checkpoints at these independently chosen rates, with seeds 101--103.

\paragraph{Other recipes.}
The direct GSM8K study evaluates $\{3\times10^{-5},10^{-4},2\times10^{-4},3\times10^{-4},6\times10^{-4}\}$, with five seeds at $10^{-4}$ and $3\times10^{-4}$ and three elsewhere.
The T\"ulu~2 comparison uses LR $10^{-4}$ for LoRA and Loop Dropout.
All recipes use the final checkpoint.

\paragraph{Other looped transformers.}
\label{app:family_settings}
\paperref{Table}{tab:family_comparison} uses rank-16 adapters with $\alpha=32$ and the MetaMath-GSM-100k recipe.
LoopUS-Qwen3-4B uses eight loops, LR $10^{-4}$ and training seed 501.
We select $p=0.3$ for \method on the development split from $\{0.1,0.2,0.3,0.4,0.5\}$.
Huginn uses 32 loops, LR $10^{-4}$ and $p=0.5$ for \method, with accuracy averaged over training seeds 301--303.
Both methods use the same settings within each model, with $p=0$ for LoRA and all adapter applications enabled at inference.

\subsection{Compute}
\label{app:compute}

Training uses a single H100, H200 or B200 GPU and bfloat16, with Transformers 5.3.0 for LoopUS and 4.57.6 for the other models.
The cost comparison in \paperref{Table}{tab:matched_expansion} uses H100 80GB measurements with PyTorch 2.8.0, scaled-dot-product attention and gradient checkpointing disabled for every reported method.
All five methods use the same 100k examples, 3,125 optimizer steps and rank-16 adapters.
Training time and peak memory are means over three training seeds.
For each seed, training time covers one selected-configuration training run and excludes development search, generation, scoring and queue time.
\paperref{Table}{tab:training_cost_seeds} reports the individual timings.
The Ouro-2.6B five-candidate comparison uses B200 GPUs with PyTorch 2.7.1; its timing is kept separate from the H100 comparison.
Direct GSM8K training at depth four takes roughly 12--14 minutes for Ouro-1.4B and 25--32 minutes for Ouro-2.6B.

\begin{table}[htbp]
\centering
\small
\caption{\textbf{Training time across seeds on H100 80GB.} Times are in minutes for seeds 101 / 102 / 103, with their mean and sample SD. Peak memory is the mean over the same runs.}
\label{tab:training_cost_seeds}
\begin{tabular}{lccc}
\toprule
Method & Per-seed time & Mean $\pm$ SD & Memory (GB) \\
\midrule
LoRA & 111.93 / 106.74 / 113.28 & 110.65 $\pm$ 3.45 & 45.36 \\
LoRA+ & 111.20 / 107.74 / 119.40 & 112.78 $\pm$ 5.99 & 45.40 \\
CoTo-on-Ouro & 94.95 / 93.32 / 94.06 & 94.11 $\pm$ 0.82 & 45.28 \\
LoRA Dropout & 442.39 / 438.34 / 453.98 & 444.91 $\pm$ 8.12 & 45.37 \\
\method & 130.86 / 128.16 / 127.65 & 128.89 $\pm$ 1.72 & 45.36 \\
\bottomrule
\end{tabular}
\end{table}

\section{Evaluation and Statistical Protocols}
\label{app:evaluation_protocols}

\subsection{Prompts, Decoding and Scoring}
\label{app:prompts}

\paragraph{MetaMath evaluation.}
The main mathematical evaluation follows the MetaMath scripts at revision \texttt{fe667b1}, using the Alpaca instruction prompt~\citep{taori2023alpaca} and the response prefix ``\texttt{Let's think step by step.}'' used in zero-shot chain-of-thought prompting~\citep{kojima2022zeroshot}.
Evaluation is zero-shot and greedy, with the reference stop strings and maximum new-token budgets of 512 for GSM8K and 2{,}048 for MATH-500.
The scorer extracts the answer after ``\texttt{The answer is:}'', then applies numeric comparison for GSM8K or the reference MATH normalization and equivalence rules.
MATH-500 is the 500-problem subset released by \citet{lightman2024verify}.

Generation uses Hugging Face Transformers~\citep{wolf2020transformers} with left-padded batches and an attention mask that covers the recurrent cache.
Every comparison uses the same prompt, decoding settings and scorer; \paperref{Appendix}{app:answer_decomposition} additionally quantifies gains on problems where both methods provide extractable answers.

\paragraph{Direct GSM8K fine-tuning.}
These experiments use the zero-shot \texttt{gsm8k} task in lm-eval-harness~\citep{gao2023lmeval}, greedy decoding and a 256-token generation cap.
Strict exact match compares the number after ``\texttt{\#\#\#\#}'' with the reference on all 1{,}319 test problems.
GSM8K-Platinum (1{,}209 relabelled problems) and GSM-Plus mini (2{,}400 perturbed problems) use the same extraction convention.
This protocol is distinct from the MetaMath recipe and its scores are not pooled with that recipe.

\paragraph{Instruction-tuning evaluation.}
\paperref{Appendix}{app:tulu_returned} specifies the reported task metrics, generation budgets and official code/IFEval scoring rules.
These use the instruction-tuned checkpoints and are kept separate from the mathematical recipe and raw-prompt base-model evaluations.

\subsection{Seeds and Uncertainty}
\label{app:uncertainty}

Main configurations use three training seeds; two learning rates of the GSM8K study use five.
Tables report the mean and sample standard deviation over training seeds, which summarize training-seed variation; paired same-rank methods share initial adapter weights and data order as specified in \paperref{Appendix}{app:hyperparameters}.

\section{Complete Benchmark Results}
\label{app:additional_results}

\FloatBarrier
\subsection{Other Looped Transformers}
\label{app:family_comparison}
We also evaluate \method on LoopUS-Qwen3-4B~\citep{park2026loopus} and Huginn~\citep{geiping2025huginn} under the MetaMath-GSM-100k recipe.
\paperref{Table}{tab:family_comparison} shows higher accuracy on both benchmarks for both models; on LoopUS-Qwen3-4B, the gains are 1.59 percentage points on GSM8K and 6.20 points on MATH-500.
\paperref{Appendix}{app:family_settings} specifies their training settings.

\begin{table}[htbp]
\centering
\footnotesize
\setlength{\tabcolsep}{8pt}
\caption{\textbf{Mathematical evaluation on other looped transformers.} Accuracy in \%. Bold marks the higher value within each model and benchmark.}
\label{tab:family_comparison}
\begin{tabular}{lcccc}
\toprule
& \multicolumn{2}{c}{LoopUS-Qwen3-4B} & \multicolumn{2}{c}{Huginn} \\
\cmidrule(lr){2-3}\cmidrule(lr){4-5}
Benchmark & LoRA & \method & LoRA & \method \\
\midrule
GSM8K & 83.93 & \textbf{85.52} & 59.89 & \textbf{60.11} \\
MATH-500 & 34.60 & \textbf{40.80} & 13.00 & \textbf{13.80} \\
\bottomrule
\end{tabular}
\end{table}

\FloatBarrier
\subsection{Per-Seed Results}
\label{app:per_seed}

\begin{table}[htbp]
\centering
\footnotesize
\setlength{\tabcolsep}{5pt}
\caption{\textbf{Per-seed results of the mask-control comparison} on Ouro-1.4B under the MetaMath protocol, at the learning rates shown; accuracy in \%.}
\label{tab:control_seeds}
\begin{tabular}{llcc}
\toprule
Method & LR & GSM8K seeds 101 / 102 / 103 & MATH-500 seeds 101 / 102 / 103 \\
\midrule
LoRA & 1e-4 & 84.6 / 85.0 / 86.4 & 39.2 / 42.0 / 39.2 \\
\method & 1e-4 & 87.7 / 87.0 / 88.0 & 44.0 / 45.4 / 42.6 \\
\midrule
Unscaled & 1e-4 & 84.5 / 84.7 / 84.9 & 40.6 / 40.0 / 37.4 \\
Unscaled & 2e-4 & 83.8 / 83.8 / 84.8 & 35.6 / 38.0 / 37.4 \\
Dose control & 1e-4 & 85.4 / 84.8 / 86.5 & 36.8 / 40.2 / 38.8 \\
Dose control & 2e-4 & 85.5 / 84.9 / 85.9 & 37.8 / 42.4 / 37.8 \\
Module-wise & 1e-4 & 85.0 / 85.4 / 86.2 & 40.6 / 42.0 / 40.8 \\
\bottomrule
\end{tabular}
\end{table}

\FloatBarrier
\subsection{Instruction Tuning on T\"ulu 2}
\label{app:tulu_returned}

The instruction-tuning study uses a 100k-example draw from the T\"ulu~2 mixture, yielding 98{,}415 trainable examples after filtering.
Ouro-1.4B is trained for 769 optimizer steps at context length 2{,}048 and effective batch size 128, with assistant-only loss and rank-16 adapters ($\alpha=32$).
LoRA and \method use learning rate $10^{-4}$.
Each method is trained with seeds 101--103, and same-seed methods share adapter initialization and data order.
\paperref{Table}{tab:tulu_all} reports all nine metrics computed for these checkpoints; \paperref{Table}{tab:tulu_main} shows four benchmarks and the six-benchmark average.
The six-benchmark average uses HumanEval+, MBPP+, MMLU, BBH, TruthfulQA MC2 and strict IFEval.

\begin{table}[htbp]
\centering
\footnotesize
\setlength{\tabcolsep}{4pt}
\caption{\textbf{All downstream metrics after T\"ulu~2 instruction tuning of Ouro-1.4B.} Rank 16; accuracy in \%, mean $\pm$ SD over three matched seeds. HumanEval+ and MBPP+ add the expanded EvalPlus tests to the original ones; IFEval reports prompt-level accuracy. The last three rows average the four code metrics, the six-benchmark set, and all nine metrics per seed; bold marks the highest average in each summary row.}
\label{tab:tulu_all}
\begin{tabular}{lcc}
\toprule
Metric & LoRA & \method \\
\midrule
MMLU (0-shot) & 68.64 $\pm$ 0.40 & 68.91 $\pm$ 0.13 \\
BBH (3-shot CoT) & 71.20 $\pm$ 0.14 & 70.78 $\pm$ 0.11 \\
TruthfulQA MC2 & 47.32 $\pm$ 1.02 & 48.19 $\pm$ 0.68 \\
HumanEval & 71.54 $\pm$ 2.14 & 72.56 $\pm$ 0.61 \\
HumanEval+ & 67.68 $\pm$ 3.05 & 69.92 $\pm$ 0.35 \\
MBPP & 75.57 $\pm$ 0.40 & 76.72 $\pm$ 0.53 \\
MBPP+ & 64.73 $\pm$ 0.55 & 64.81 $\pm$ 0.79 \\
IFEval strict & 46.33 $\pm$ 1.26 & 46.46 $\pm$ 1.02 \\
IFEval loose & 50.59 $\pm$ 1.44 & 50.40 $\pm$ 0.11 \\
\midrule
Code average (4 metrics) & 69.88 $\pm$ 1.21 & \textbf{71.00 $\pm$ 0.46} \\
Average (6 benchmarks) & 60.98 $\pm$ 0.66 & \textbf{61.51 $\pm$ 0.09} \\
Average (9 metrics) & 62.62 $\pm$ 0.78 & \textbf{63.20 $\pm$ 0.14} \\
\bottomrule
\end{tabular}
\end{table}

\paragraph{Evaluation.}
MMLU~\citep{hendrycks2021mmlu} (14{,}042 questions, 0-shot) and BBH~\citep{suzgun2023bbh} (27 tasks, 6{,}511 questions, 3-shot chain of thought) use lm-eval-harness~\citep{gao2023lmeval}; TruthfulQA MC2 averages the probability mass assigned to correct answers over 817 questions.
The code evaluations use the pinned official EvalPlus scorer~\citep{liu2023evalplus} on all 164 HumanEval~\citep{chen2021codex} and 378 MBPP~\citep{austin2021program} tasks in its evaluation sets, with one greedy generation per task capped at 1{,}024 tokens; HumanEval+ and MBPP+ require passing the additional tests as well as the original ones.
IFEval uses all 541 prompts, a 2{,}048-token cap and the pinned official evaluator with scoring seed zero, reporting prompt-level strict and loose accuracy.
Code and IFEval generation use T\"ulu user/assistant turns.
All configurations use the same environment, prompt definitions and benchmark instances.

\paragraph{Reading the table.}
Compared with LoRA, \method increases the nine-metric average from 62.62 to 63.20, the six-metric average in \paperref{Table}{tab:tulu_main} from 60.98 to 61.51, and the average over four code metrics from 69.88 to 71.00.
It improves HumanEval+, MBPP and TruthfulQA MC2 by 2.24, 1.15 and 0.87 points, respectively; the MBPP improvement holds in every seed.
On the remaining six metrics, \method lies within about one point of LoRA.
The seed standard deviations summarize training-seed variation, as described in \paperref{Appendix}{app:uncertainty}.

\FloatBarrier

\FloatBarrier
\section{Extended Ablations}
\label{app:extended_ablations}

\subsection{Dropout Variants and Structural Controls}
\label{app:variants}

\paragraph{Rescaling and dropout probability.}
The GSM8K comparison favors rescaled dropout over unscaled dropout at both tested learning rates.
At $10^{-4}$, the accuracies are 80.6\% for LoRA, 77.0\% for unscaled dropout with $p=0.5$, and 81.7\% with rescaling; \paperref{Table}{tab:mask_controls} gives the three-seed MetaMath comparison.
\paperref{Appendix}{app:rescale_analysis} relates the two training parameterizations.

A three-seed probability study obtains 81.1 $\pm$ 0.5, 81.7 $\pm$ 0.2 and 81.6 $\pm$ 1.1 at $p=0.25,0.5,0.75$, respectively, compared with 80.7 $\pm$ 0.2 for LoRA and 79.5 $\pm$ 0.5 for standard adapter input dropout.
At the higher learning rate the corresponding means are 80.0, 81.5 and 81.0, versus 78.1 for LoRA.
The middle probability gives the highest mean at both learning rates and is our default.
Independent per-loop adapters remain below Loop Dropout under this recipe, with equal-rank accuracies of 80.1 / 76.9 and equal-budget accuracies of 79.7 / 79.5 at the two rates.

\begin{table}[htbp]
\centering
\footnotesize
\setlength{\tabcolsep}{4pt}
\caption{\textbf{Structured, scheduled and adaptive masks} on the GSM8K recipe (Ouro-1.4B, strict exact match, \%; two seeds, mean $\pm$ SD). Masks control adapter applications while all four backbone loops run; all variants keep the $1/P(\text{keep})$ rescaling. Bottom: five-seed follow-up of the two closest variants. No variant exceeds the uniform rule at both learning rates.}
\label{tab:variants}
\begin{tabular}{llcc}
\toprule
Variant & Mask & LR 1e-4 & LR 3e-4 \\
\midrule
LoRA & none & 80.6 $\pm$ 0.3 & 78.1 $\pm$ 1.0 \\
\method & uniform $p=0.5$ & 81.7 $\pm$ 0.2 & 81.3 $\pm$ 0.7 \\
Early-heavy profile & $p=(.75,.75,.25,.25)$ & 82.1 $\pm$ 0.2 & 79.4 $\pm$ 2.6 \\
Late-heavy profile & $p=(.25,.25,.75,.75)$ & 80.0 $\pm$ 0.5 & 77.9 $\pm$ 1.0 \\
Ramp down & $p=(.8,.6,.4,.2)$ & 80.7 $\pm$ 0.8 & 79.3 $\pm$ 0.4 \\
Ramp up & $p=(.2,.4,.6,.8)$ & 80.0 $\pm$ 0.8 & 79.0 $\pm$ 2.6 \\
Exactly two loops & random pair, $\times2$ & 81.6 $\pm$ 0.4 & 78.2 $\pm$ 1.1 \\
Exactly one loop & random loop, $\times4$ & 81.0 $\pm$ 0.5 & 77.3 $\pm$ 2.4 \\
Random prefix & loops $1..T$ & 79.9 $\pm$ 1.0 & 78.9 $\pm$ 0.1 \\
Random suffix & loops $T..4$ & 79.9 $\pm$ 1.0 & 78.0 $\pm$ 0.5 \\
Schedule $p$: 0.75 $\to$ 0 & over training & 81.0 $\pm$ 0.1 & 79.4 $\pm$ 0.2 \\
Schedule $p$: 0 $\to$ 0.75 & over training & 81.3 $\pm$ 0.4 & 81.2 $\pm$ 0.8 \\
Adaptive, more where sensitive & $p_t \propto$ sensitivity & 81.4 $\pm$ 0.4 & 79.7 $\pm$ 1.2 \\
Adaptive, less where sensitive & $p_t \propto 1/$sensitivity & 81.2 $\pm$ 0.1 & 81.5 $\pm$ 0.6 \\
Learned per-loop gates & no dropout & 79.4 $\pm$ 1.0 & 80.1 $\pm$ 0.1 \\
Learned gates + \method & uniform $p=0.5$ & 80.7 $\pm$ 0.8 & 82.8 $\pm$ 0.0 \\
\midrule
Early-heavy profile (5 seeds) & $p=(.75,.75,.25,.25)$ & 81.8 $\pm$ 0.7 & 79.3 $\pm$ 1.4 \\
Learned gates + \method (5 seeds) & uniform $p=0.5$ & 81.2 $\pm$ 0.6 & 82.0 $\pm$ 0.8 \\
\bottomrule
\end{tabular}
\end{table}

\paragraph{Reading the variants.}
Uniform masking matches the early-heavy profile at the lower rate and exceeds it at the higher rate in the five-seed follow-up.
At the higher rate, uniform masking also achieves higher accuracy than exactly sized subsets and schedules that anneal the dropout probability to zero.
These comparisons support the uniform rule as a default across the two learning rates.
Learning per-loop gates alone also remains below uniform masking on generation accuracy.
Learned gates add trainable parameters and fall below uniform masking at $10^{-4}$, so we keep the parameter-free rule.

\FloatBarrier

\FloatBarrier
\section{Rank and Noise Studies}
\label{app:returned_suites}

These suites use Ouro-1.4B at depth four with the MetaMath-GSM-100k recipe and the same test documents, prompts and scoring rules as the rank-16 reference.

\FloatBarrier
\subsection{Rank Dependence at a Common Learning Rate}

\begin{table}[htbp]
\centering
\footnotesize
\caption{\textbf{Paired comparisons at different shared adapter ranks.} Three seeds, LR $10^{-4}$ and $\alpha/r=2$. Accuracy is in \%; differences are in percentage points.}
\label{tab:rank_results}
\begin{tabular}{llccc}
\toprule
Rank & Benchmark & LoRA & Loop Dropout & Difference \\
\midrule
4 & GSM8K & 85.82 $\pm$ 0.20 & 86.38 $\pm$ 0.50 & $+0.56$ \\
4 & MATH-500 & 39.00 $\pm$ 1.25 & 46.07 $\pm$ 1.17 & $+7.07$ \\
64 & GSM8K & 85.44 $\pm$ 1.12 & 87.01 $\pm$ 0.49 & $+1.57$ \\
64 & MATH-500 & 37.87 $\pm$ 1.70 & 43.33 $\pm$ 1.85 & $+5.47$ \\
128 & GSM8K & 85.06 $\pm$ 0.62 & 86.48 $\pm$ 0.22 & $+1.42$ \\
128 & MATH-500 & 34.80 $\pm$ 1.00 & 41.60 $\pm$ 1.39 & $+6.80$ \\
\bottomrule
\end{tabular}
\end{table}

The MATH-500 difference is positive for each of the nine rank-by-seed pairs, and the mean GSM8K difference is positive at every rank.
\paperref{Table}{tab:mask_controls} reports the rank-16 comparison at the same learning rate.

\subsection{Answer Availability and the MATH-500 Gain}
\label{app:answer_decomposition}

\begin{table}[htbp]
\centering
\small
\caption{\textbf{Decomposing the net MATH-500 accuracy difference.} Missing-answer rates are percentages. Both net-difference components use all 500 problems as denominator and sum to the overall gain in percentage points. This decomposition does not change the official scorer.}
\label{tab:answer_decomposition}
\begin{tabular}{rccccc}
\toprule
 & \multicolumn{2}{c}{Missing answers (\%)} & \multicolumn{3}{c}{Net gain (points)} \\
\cmidrule(lr){2-3}\cmidrule(lr){4-6}
Rank & LoRA & Loop Dropout & Both extractable & Missing in either & Total \\
\midrule
4 & 8.80 & 9.60 & 6.80 & 0.27 & 7.07 \\
64 & 7.60 & 5.60 & 4.80 & 0.67 & 5.47 \\
128 & 7.80 & 5.67 & 6.00 & 0.80 & 6.80 \\
\bottomrule
\end{tabular}
\end{table}

\paperref{Table}{tab:answer_decomposition} shows that most of the gain occurs where both methods produce extractable answers.
Across the three ranks, improvements on pairs with two extractable answers account for 6.80, 4.80 and 6.00 points of the total gains.
The advantage therefore persists on problems for which both methods satisfy the answer-extraction requirement.

\FloatBarrier
\subsection{Per-Seed Measurements}

\paperref{Table}{tab:returned_seeds} reports each training seed separately for the rank sweep and for the noise controls, including a second configuration of each control trained at $2\times10^{-4}$.

\begin{table}[htbp]
\centering
\footnotesize
\caption{\textbf{Per-seed results of the additional suites.} Seeds are ordered 101 / 102 / 103. Accuracy is in \%; $c$ is the noise strength and is inapplicable to LoRA and Loop Dropout.}
\label{tab:returned_seeds}
\begin{tabular}{lrcccc}
\toprule
Method & Rank & LR & $c$ & GSM8K & MATH-500 \\
\midrule
LoRA & 4 & 1e-4 & -- & 85.90 / 85.60 / 85.97 & 38.0 / 38.6 / 40.4 \\
Loop Dropout & 4 & 1e-4 & -- & 86.13 / 86.05 / 86.96 & 45.6 / 47.4 / 45.2 \\
LoRA & 64 & 1e-4 & -- & 84.15 / 86.20 / 85.97 & 36.2 / 39.6 / 37.8 \\
Loop Dropout & 64 & 1e-4 & -- & 87.57 / 86.66 / 86.81 & 44.4 / 41.2 / 44.4 \\
LoRA & 128 & 1e-4 & -- & 84.91 / 84.53 / 85.75 & 35.8 / 33.8 / 34.8 \\
Loop Dropout & 128 & 1e-4 & -- & 86.35 / 86.35 / 86.73 & 43.2 / 40.8 / 40.8 \\
Low-rank weight noise & 16 & 1e-4 & 1 & 85.22 / 85.60 / 86.28 & 38.2 / 39.8 / 39.4 \\
Low-rank weight noise & 16 & 2e-4 & 1 & 85.90 / 84.08 / 86.81 & 37.8 / 39.0 / 35.6 \\
Parallel noise & 16 & 1e-4 & 1 & 85.29 / 85.22 / 86.35 & 36.8 / 37.6 / 39.4 \\
Parallel noise & 16 & 2e-4 & 2 & 85.06 / 86.13 / 86.13 & 38.4 / 37.4 / 36.8 \\
\bottomrule
\end{tabular}
\end{table}
\FloatBarrier

\FloatBarrier
\section{Additional Baseline Comparisons}
\label{app:comparison_completion}

This section gives the baseline configurations and per-seed measurements supporting \paperref{Sections}{sec:main_results}, \papernumref{sec:expanded_baselines} and~\papernumref{sec:analysis}.
Three-seed summaries use the same three seeds for every method in their block.

\subsection{Baseline Configurations and Per-Seed Results}
\label{app:coto_seeds}

\paragraph{CoTo-on-Ouro.}
The CoTo adaptation follows the physical-layer schedule of the official implementation.
All seven adapted projections within a physical layer share a switch, and that switch remains fixed across recurrent uses and the effective optimizer batch.
The keep probability starts at 0.1 and increases to one over the first 75\% of optimizer steps, with a nonempty set of active layers and no inverse-survival scaling.
The final checkpoint is evaluated with all adapters active.
Each run uses the same 100k training examples, 3,125 updates and rank-16 parameterization as the other mathematical comparisons.
The chosen learning rate is $2\times10^{-4}$ from the five-rate grid $\{1.25,2.5,5,10,20\}\times10^{-5}$, with the choice fixed before test evaluation.
\paperref{Table}{tab:coto_seeds} gives the per-seed results.

\begin{table}[htbp]
\centering
\small
\caption{\textbf{Per-seed CoTo-on-Ouro results.} Full GSM8K and MATH-500 test sets under the MetaMath protocol; accuracy in \%. The summary uses the sample standard deviation.}
\label{tab:coto_seeds}
\begin{tabular}{lcc}
\toprule
Seed & GSM8K & MATH-500 \\
\midrule
101 & 84.84 & 37.40 \\
102 & 85.60 & 39.00 \\
103 & 86.35 & 37.00 \\
\midrule
Mean $\pm$ SD & 85.60 $\pm$ 0.76 & 37.80 $\pm$ 1.06 \\
\bottomrule
\end{tabular}
\end{table}

\paragraph{LoRA+.}
LoRA+ chooses an $A$-factor rate of $5\times10^{-5}$ on Ouro-1.4B from the halved five-candidate grid, with the fourfold multiplier for $B$.

\paragraph{LoRA Dropout.}
We evaluate LoRA Dropout~\citep{lin2024loradropout} with three training seeds on Ouro-1.4B.
The baseline uses dropout probability 0.5 and four training masks, with deterministic inference using the expected masked update and no test-time ensemble, since ensembling would multiply inference cost by the number of masks.
Its learning rate is $10^{-4}$, chosen from the same five-candidate development grid as the other regularizers.
\paperref{Table}{tab:lora_dropout_seeds} gives the per-seed results of both methods.

\begin{table}[htbp]
\centering
\small
\caption{\textbf{LoRA+ and LoRA Dropout across three training seeds.} Ouro-1.4B, rank 16, MetaMath-GSM recipe; accuracy in \%.}
\label{tab:lora_dropout_seeds}
\begin{tabular}{lcccc}
\toprule
& \multicolumn{2}{c}{LoRA+} & \multicolumn{2}{c}{LoRA Dropout} \\
\cmidrule(lr){2-3}\cmidrule(lr){4-5}
Seed & GSM8K & MATH-500 & GSM8K & MATH-500 \\
\midrule
101 & 85.29 & 37.80 & 85.44 & 42.20 \\
102 & 85.60 & 38.60 & 85.29 & 41.80 \\
103 & 85.75 & 38.00 & 86.58 & 42.40 \\
\midrule
Mean $\pm$ SD & 85.54 $\pm$ 0.23 & 38.13 $\pm$ 0.42 & 85.77 $\pm$ 0.70 & 42.13 $\pm$ 0.31 \\
\bottomrule
\end{tabular}
\end{table}

\FloatBarrier
\subsection{Adapter Sharing on Ouro-2.6B}
\label{app:ouro26_tuned}

\paperref{Table}{tab:ouro26_tuned} compares shared and independent adapters with five learning-rate candidates per method, development selection on seed 101 and three training seeds at the chosen rate.
Independent rank-four adapters match the 30.3M parameters of the shared rank-16 adapter; independent rank-sixteen adapters use 121.1M parameters.
The shared rows are the same checkpoints as in \paperref{Table}{tab:math_main}; \paperref{Table}{tab:ouro26_tuned_seeds} lists the per-seed results and chosen learning rates.

\begin{table}[htbp]
\centering
\small
\caption{\textbf{Shared and independent adapters on Ouro-2.6B.} MetaMath-GSM recipe, mean $\pm$ SD over three training seeds, in \%. All backbone loops and all trained adapter applications are active at inference.}
\label{tab:ouro26_tuned}
\begin{tabular}{lrcc}
\toprule
Method & Params & GSM8K & MATH-500 \\
\midrule
Shared LoRA, rank 16 & 30.3M & 87.62 $\pm$ 0.52 & 42.87 $\pm$ 1.17 \\
LoRA+, rank 16 & 30.3M & 87.21 $\pm$ 0.38 & 43.40 $\pm$ 0.53 \\
Independent, rank 4 & 30.3M & 85.95 $\pm$ 2.59 & 43.60 $\pm$ 0.40 \\
Independent, rank 16 & 121.1M & 85.87 $\pm$ 0.87 & 41.40 $\pm$ 1.71 \\
\method, rank 16 & 30.3M & \textbf{88.73 $\pm$ 0.31} & \textbf{47.20 $\pm$ 0.53} \\
\bottomrule
\end{tabular}
\end{table}

\begin{table}[htbp]
\centering
\footnotesize
\caption{\textbf{Per-seed Ouro-2.6B results after independent tuning.} Seeds are ordered 101 / 102 / 103; accuracy in \%.}
\label{tab:ouro26_tuned_seeds}
\begin{tabular}{llcc}
\toprule
Method & LR & GSM8K & MATH-500 \\
\midrule
Shared LoRA, rank 16 & $2\times10^{-4}$ & 88.02 / 87.04 / 87.79 & 42.00 / 44.20 / 42.40 \\
LoRA+, rank 16 & $10^{-4}$ & 87.04 / 86.95 / 87.64 & 43.80 / 42.80 / 43.60 \\
Independent, rank 4 & $2\times10^{-4}$ & 86.88 / 87.95 / 83.02 & 43.20 / 44.00 / 43.60 \\
Independent, rank 16 & $2\times10^{-4}$ & 85.90 / 84.99 / 86.73 & 39.80 / 41.20 / 43.20 \\
\method, rank 16 & $10^{-4}$ & 89.01 / 88.78 / 88.40 & 47.80 / 47.00 / 46.80 \\
\bottomrule
\end{tabular}
\end{table}
\FloatBarrier

\FloatBarrier
\section{Additional Robustness Results}
\label{app:robustness}

This section gives the full numerical results supporting \paperref{Section}{sec:model_generalization}.
The rank study uses the MetaMath-GSM recipe; the depth and shifted-test studies use the direct GSM8K recipe.

\subsection{Adapter Rank with Independent Tuning}
\label{app:rank_tuned}

Both methods use the same five-rate grid $\{1.25,2.5,5,10,20\}\times10^{-5}$ at each rank, with $\alpha/r=2$.
Each rank and method fixes its configuration independently before test evaluation, then uses that configuration for seeds 101--103.
\paperref{Figure}{fig:rank_overview} and \paperref{Table}{tab:rank_tuned} show positive mean differences on both benchmarks at every rank.
The rank-16 entries are the same three-seed results as in \paperref{Table}{tab:matched_expansion}.
The shared row in \paperref{Table}{tab:sharing_mask} uses the corresponding seed-101 checkpoints.
\paperref{Table}{tab:rank_results} reports the rank study at $10^{-4}$.

\begin{table}[!htbp]
\centering
\small
\caption{\textbf{Independent tuning at each adapter rank.} Ouro-1.4B, three seeds per rank and method; full GSM8K and MATH-500 evaluation, mean $\pm$ SD in \%. Both methods have five learning-rate candidates at each rank.}
\label{tab:rank_tuned}
\begin{tabular}{rlcc}
\toprule
Rank & Method & GSM8K & MATH-500 \\
\midrule
4 & LoRA & 85.82 $\pm$ 0.20 & 39.00 $\pm$ 1.25 \\
4 & \method & 86.38 $\pm$ 0.50 & 46.07 $\pm$ 1.17 \\
16 & LoRA & 85.34 $\pm$ 0.64 & 38.87 $\pm$ 1.67 \\
16 & \method & 86.53 $\pm$ 0.44 & 47.53 $\pm$ 2.61 \\
64 & LoRA & 85.44 $\pm$ 1.12 & 37.87 $\pm$ 1.70 \\
64 & \method & 85.87 $\pm$ 0.48 & 40.73 $\pm$ 1.42 \\
128 & LoRA & 86.10 $\pm$ 0.29 & 39.27 $\pm$ 2.00 \\
128 & \method & 86.48 $\pm$ 0.22 & 41.60 $\pm$ 1.39 \\
\bottomrule
\end{tabular}
\end{table}

\begin{table}[!htbp]
\centering
\footnotesize
\setlength{\tabcolsep}{4pt}
\caption{\textbf{Per-seed results after independent rank-wise tuning.} Seeds are ordered 101 / 102 / 103; accuracy in \%. The same learning rate is used for all three seeds in a row.}
\label{tab:rank_tuned_seeds}
\begin{tabular}{rlccc}
\toprule
Rank & Method & LR & GSM8K & MATH-500 \\
\midrule
4 & LoRA & $10^{-4}$ & 85.90 / 85.60 / 85.97 & 38.00 / 38.60 / 40.40 \\
4 & \method & $10^{-4}$ & 86.13 / 86.05 / 86.96 & 45.60 / 47.40 / 45.20 \\
16 & LoRA & $2\times10^{-4}$ & 84.69 / 85.37 / 85.97 & 37.00 / 40.20 / 39.40 \\
16 & \method & $5\times10^{-5}$ & 87.04 / 86.35 / 86.20 & 48.40 / 49.60 / 44.60 \\
64 & LoRA & $10^{-4}$ & 84.15 / 86.20 / 85.97 & 36.20 / 39.60 / 37.80 \\
64 & \method & $2\times10^{-4}$ & 85.60 / 85.60 / 86.43 & 42.00 / 39.20 / 41.00 \\
128 & LoRA & $2.5\times10^{-5}$ & 85.90 / 85.97 / 86.43 & 37.00 / 40.80 / 40.00 \\
128 & \method & $10^{-4}$ & 86.35 / 86.35 / 86.73 & 43.20 / 40.80 / 40.80 \\
\bottomrule
\end{tabular}

\end{table}

\paragraph{Where the rank-16 transfer gain occurs.}
\paperref{Table}{tab:math_rank16_breakdown} partitions the same MATH-500 test set by subject and difficulty.
\method improves mean accuracy in all seven subjects and all five difficulty levels, with the largest level-wise difference at level 3.
The improvement therefore spans the subject groups rather than coming from a single category.

\begin{table}[!htbp]
\centering
\small
\caption{\textbf{MATH-500 breakdown for independently tuned rank-16 adapters.} Mean $\pm$ SD over three seeds, in \%; differences in percentage points. Each partition covers all 500 problems, with $n$ denoting the number of problems per group. Checkpoints are identical to those in \paperref{Table}{tab:rank_tuned}.}
\label{tab:math_rank16_breakdown}
\begin{tabular}{lrccc}
\toprule
Group & $n$ & LoRA & Loop Dropout & Difference \\
\midrule
Algebra & 124 & 58.33 $\pm$ 3.05 & 72.04 $\pm$ 3.05 & $+13.71$ \\
Counting \& Probability & 38 & 29.82 $\pm$ 9.24 & 39.47 $\pm$ 2.63 & $+9.65$ \\
Geometry & 41 & 32.52 $\pm$ 6.14 & 38.21 $\pm$ 3.73 & $+5.69$ \\
Intermediate Algebra & 97 & 19.59 $\pm$ 1.03 & 25.43 $\pm$ 6.21 & $+5.84$ \\
Number Theory & 62 & 40.32 $\pm$ 5.59 & 44.09 $\pm$ 4.06 & $+3.76$ \\
Prealgebra & 82 & 54.88 $\pm$ 4.40 & 62.60 $\pm$ 1.86 & $+7.72$ \\
Precalculus & 56 & 14.88 $\pm$ 3.72 & 25.60 $\pm$ 2.73 & $+10.71$ \\
\midrule
Level 1 & 43 & 75.97 $\pm$ 1.34 & 78.29 $\pm$ 3.55 & $+2.33$ \\
Level 2 & 90 & 60.74 $\pm$ 1.28 & 67.04 $\pm$ 2.31 & $+6.30$ \\
Level 3 & 105 & 45.40 $\pm$ 0.55 & 62.86 $\pm$ 1.90 & $+17.46$ \\
Level 4 & 128 & 34.11 $\pm$ 1.63 & 42.71 $\pm$ 5.32 & $+8.59$ \\
Level 5 & 134 & 11.69 $\pm$ 4.56 & 17.16 $\pm$ 2.69 & $+5.47$ \\
\bottomrule
\end{tabular}
\end{table}

\paragraph{Answer availability.}
We also decompose the rank-16 MATH-500 accuracy difference according to whether the official scorer extracts an answer for both methods.
Of the 8.67-point total gain, 7.87 points come from problems with two extractable answers and 0.80 from the remaining problems, using all 500 questions as the denominator for both components.
The gain on pairs with two extractable answers is positive in each seed: 9.80, 8.40 and 5.40 points for seeds 101--103.
Most of the improvement thus reflects correctness on questions where both methods satisfy the answer-extraction requirement.

\FloatBarrier

\FloatBarrier
\subsection{Training and Evaluation Depth}
\label{app:depth_matrix}

Each saved adapter is evaluated at $K\in\{4,6,8\}$ without further fine-tuning, testing zero-shot generalization when evaluation extends beyond the training depth.
\paperref{Table}{tab:depth_matrix} reports the absolute accuracies underlying the training-by-evaluation comparison in \paperref{Figure}{fig:depth_sensitivity}.
\paperref{Figure}{fig:depth_accuracy} shows the evaluation-depth curves for adapters trained at four loops.

\begin{figure}[!htbp]
\centering
\begin{tikzpicture}
\begin{axis}[scale only axis,width=5.9cm,height=3.9cm,
  xmin=3.7,xmax=8.6,ymin=67.8,ymax=85,
  xtick={4,6,8},ytick={70,75,80,85},
  xlabel={Evaluation Depth},ylabel={GSM8K Accuracy (\%)},
  legend style={at={(0.02,0.04)},anchor=south west}]
\addplot+[basecurve] coordinates {(4.000000,79.858479) +- (0,1.034905) (6.000000,74.981046) +- (0,0.991410) (8.000000,70.836492) +- (0,1.936377)};
\addplot+[ourscurve] coordinates {(4.000000,81.753854) +- (0,0.463237) (6.000000,77.963103) +- (0,1.670226) (8.000000,75.587566) +- (0,1.261813)};
\legend{LoRA,Loop Dropout}
\draw[FigMain!60,line width=0.5pt,{Bar[width=2.4pt]}-{Bar[width=2.4pt]}] (axis cs:4.22,79.858479) -- (axis cs:4.22,81.753854);
\node[anchor=west,font=\scriptsize,text=FigMain] at (axis cs:4.27,80.806166) {$+1.90$};
\draw[FigMain!60,line width=0.5pt,{Bar[width=2.4pt]}-{Bar[width=2.4pt]}] (axis cs:6.22,74.981046) -- (axis cs:6.22,77.963103);
\node[anchor=west,font=\scriptsize,text=FigMain] at (axis cs:6.27,76.472075) {$+2.98$};
\draw[FigMain!60,line width=0.5pt,{Bar[width=2.4pt]}-{Bar[width=2.4pt]}] (axis cs:8.22,70.836492) -- (axis cs:8.22,75.587566);
\node[anchor=west,font=\scriptsize,text=FigMain] at (axis cs:8.27,73.212029) {$+4.75$};
\end{axis}
\end{tikzpicture}
\caption{\textbf{Zero-shot generalization beyond the training depth.} Ouro-1.4B after direct GSM8K fine-tuning at four loops, evaluated without further training; accuracy is mean $\pm$ SD over three seeds. Annotations show the gain of Loop Dropout over LoRA in percentage points.}
\label{fig:depth_accuracy}
\end{figure}

\begin{table}[!htbp]
\centering
\small
\caption{\textbf{Accuracy across training and evaluation depths.} GSM8K recipe, Ouro-1.4B, LR $10^{-4}$ ($A$-factor rate for LoRA+), seeds 101--103; mean $\pm$ SD of strict exact match, \%. Loop Dropout has higher mean accuracy than LoRA in all nine depth pairs. Comparisons within a training depth share the training budget; different training depths use different numbers of recurrent passes per optimizer step.}
\label{tab:depth_matrix}
\begin{tabular}{llccc}
\toprule
Method & Train $K$ & Eval $K=4$ & Eval $K=6$ & Eval $K=8$ \\
\midrule
LoRA & 4 & 79.9 $\pm$ 1.0 & 75.0 $\pm$ 1.0 & 70.8 $\pm$ 1.9 \\
LoRA & 6 & 82.4 $\pm$ 1.1 & 82.3 $\pm$ 0.7 & 77.8 $\pm$ 0.7 \\
LoRA & 8 & 80.3 $\pm$ 1.4 & 81.8 $\pm$ 0.6 & 81.6 $\pm$ 0.9 \\
\method & 4 & 81.8 $\pm$ 0.5 & 78.0 $\pm$ 1.7 & 75.6 $\pm$ 1.3 \\
\method & 6 & 83.0 $\pm$ 0.3 & 83.3 $\pm$ 0.7 & 80.5 $\pm$ 0.8 \\
\method & 8 & 83.4 $\pm$ 0.5 & 83.6 $\pm$ 0.5 & 82.1 $\pm$ 0.8 \\
\midrule
LoRA+ & 4 & 79.8 $\pm$ 0.5 & 72.8 $\pm$ 2.1 & 70.2 $\pm$ 2.2 \\
rsLoRA & 4 & 79.8 $\pm$ 0.9 & 75.0 $\pm$ 1.4 & 71.1 $\pm$ 2.4 \\
\bottomrule
\end{tabular}
\end{table}

\FloatBarrier

\subsection{Learning Rates and Shifted Mathematical Tests}
\label{app:lr_shifted}

\paragraph{Learning rate and training depth.}
Under direct GSM8K fine-tuning, \method improves mean accuracy at every tested learning rate in \paperref{Table}{tab:lr_curve}.
The matched-depth comparisons in \paperref{Table}{tab:depth} show higher mean accuracy for \method across recurrence depths from two to eight loops.
\paperref{Tables}{tab:lr_curve} and~\papernumref{tab:depth} give the full curves.

\paragraph{Shifted mathematical test sets.}
\paperref{Table}{tab:transfer} shows that saved GSM8K adapters retain positive margins on both the relabelled GSM8K-Platinum~\citep{vendrow2025platinum} and the perturbed GSM-Plus~\citep{li2024gsmplus} benchmark.
The gains on GSM-Plus mini are 1.14 and 3.40 points at the two reported learning rates, extending the comparison to perturbed problem statements.

\begin{table}[htbp]
\centering
\small
\caption{\textbf{The gain persists across learning rates and model sizes under the GSM8K recipe.} Zero-shot strict exact match (\%) after fine-tuning on the GSM8K training split, mean $\pm$ SD. The first and last blocks use three seeds; the middle block uses seeds 0--4. LR denotes the $A$-factor rate for LoRA+. Bold marks the highest mean within each model, seed set and learning rate.}
\label{tab:gsm8k_scale}
\begin{tabular}{llcc}
\toprule
Model & Method & LR & GSM8K \\
\midrule
Ouro-1.4B & LoRA & 1e-4 & 79.9 $\pm$ 1.0 \\
Ouro-1.4B & LoRA+ & 1e-4 & 79.8 $\pm$ 0.5 \\
Ouro-1.4B & rsLoRA & 1e-4 & 79.8 $\pm$ 0.9 \\
Ouro-1.4B & \method & 1e-4 & \textbf{81.8 $\pm$ 0.5} \\
\midrule
Ouro-1.4B & LoRA & 1e-4 & 80.4 $\pm$ 0.6 \\
Ouro-1.4B & \method & 1e-4 & \textbf{81.9 $\pm$ 0.4} \\
Ouro-1.4B & LoRA & 3e-4 & 78.6 $\pm$ 1.1 \\
Ouro-1.4B & \method & 3e-4 & \textbf{81.2 $\pm$ 0.6} \\
\midrule
Ouro-2.6B & LoRA & 1e-4 & 87.6 $\pm$ 0.8 \\
Ouro-2.6B & \method & 1e-4 & \textbf{89.0 $\pm$ 0.3} \\
Ouro-2.6B & LoRA & 3e-4 & 85.9 $\pm$ 0.9 \\
Ouro-2.6B & \method & 3e-4 & \textbf{88.1 $\pm$ 1.0} \\
\bottomrule
\end{tabular}
\end{table}

\begin{table}[htbp]
\centering
\small
\caption{\textbf{Learning-rate curve on the GSM8K recipe} (Ouro-1.4B, $K=4$; lm-eval zero-shot strict exact match, \%). Five seeds at $10^{-4}$ and $3\times10^{-4}$, three seeds elsewhere.}
\label{tab:lr_curve}
\begin{tabular}{lccc}
\toprule
LR & LoRA & \method & $\Delta$ \\
\midrule
3e-5 & 80.4 $\pm$ 0.4 & 80.9 $\pm$ 0.3 & $+0.56$ \\
1e-4 & 80.4 $\pm$ 0.6 & 81.9 $\pm$ 0.4 & $+1.52$ \\
2e-4 & 79.8 $\pm$ 1.5 & \textbf{82.2 $\pm$ 0.6} & $+2.38$ \\
3e-4 & 78.6 $\pm$ 1.1 & 81.2 $\pm$ 0.6 & $+2.62$ \\
6e-4 & 72.4 $\pm$ 0.7 & 73.8 $\pm$ 0.8 & $+1.36$ \\
\bottomrule
\end{tabular}
\end{table}

\begin{table}[htbp]
\centering
\small
\caption{\textbf{Transfer of the saved GSM8K-recipe adapters} (Ouro-1.4B, three seeds) to GSM8K-Platinum (1{,}209 relabelled problems) and GSM-Plus mini (2{,}400 perturbed problems); zero-shot strict exact match, \%. ``Input dropout'' applies standard dropout of 0.1 to the LoRA input.}
\label{tab:transfer}
\footnotesize
\setlength{\tabcolsep}{3pt}
\begin{tabular}{llcccc}
\toprule
 & & \multicolumn{2}{c}{GSM8K-Platinum} & \multicolumn{2}{c}{GSM-Plus mini} \\
\cmidrule(lr){3-4}\cmidrule(lr){5-6}
Method & LR & Accuracy & $\Delta$ vs.\ LoRA & Accuracy & $\Delta$ vs.\ LoRA \\
\midrule
LoRA & 1e-4 & 82.4 $\pm$ 0.0 & -- & 58.9 $\pm$ 0.7 & -- \\
Input dropout & 1e-4 & 81.7 $\pm$ 0.4 & $-0.66$ & 59.1 $\pm$ 0.4 & $+0.21$ \\
\method & 1e-4 & \textbf{83.8 $\pm$ 0.4} & $+1.38$ & \textbf{60.0 $\pm$ 0.2} & $+1.14$ \\
LoRA & 3e-4 & 80.3 $\pm$ 0.4 & -- & 56.0 $\pm$ 1.0 & -- \\
Input dropout & 3e-4 & 80.7 $\pm$ 1.7 & $+0.33$ & 57.0 $\pm$ 1.1 & $+1.04$ \\
\method & 3e-4 & \textbf{83.1 $\pm$ 0.7} & $+2.73$ & \textbf{59.4 $\pm$ 0.4} & $+3.40$ \\
\bottomrule
\end{tabular}
\end{table}

\begin{table}[htbp]
\centering
\small
\caption{\textbf{Training and evaluating at the same recurrence depth} on the GSM8K recipe (Ouro-1.4B, LR $10^{-4}$; strict exact match, \%). Seeds 0--2 (five at $K=4$) use independent initializations; seeds 101--103 share initialization across methods.}
\label{tab:depth}
\begin{tabular}{llccc}
\toprule
$K$ & Seeds & LoRA & \method & $\Delta$ \\
\midrule
2 & 0--2 & 68.6 $\pm$ 0.8 & 70.7 $\pm$ 0.3 & $+2.17$ \\
4 & 0--4 & 80.4 $\pm$ 0.6 & 81.9 $\pm$ 0.4 & $+1.52$ \\
6 & 0--2 & 81.6 $\pm$ 0.6 & 83.2 $\pm$ 1.2 & $+1.59$ \\
\midrule
4 & 101--103 & 79.9 $\pm$ 1.0 & 81.8 $\pm$ 0.5 & $+1.90$ \\
6 & 101--103 & 82.3 $\pm$ 0.7 & 83.3 $\pm$ 0.7 & $+1.06$ \\
8 & 101--103 & 81.6 $\pm$ 0.9 & 82.1 $\pm$ 0.8 & $+0.51$ \\
\bottomrule
\end{tabular}
\end{table}

\FloatBarrier

\FloatBarrier
\section{Diagnosing and Strengthening Early-Loop Adaptation}
\label{app:single_occurrence}

\subsection{Single-Loop Activation}

For a trained shared adapter, we enable its update at one loop at a time while the backbone executes all four loops.
The teacher-forced GSM8K diagnostics in this appendix each use 500 test problems.
All active gates have unit strength, and the single-loop diagnostic scores the reference continuation at the final loop.
The adapter was trained with the direct GSM8K recipe; \paperref{Table}{tab:single_occurrence} reports the two learning rates separately.
We express the improvement as a percentage reduction in loss relative to the frozen model:
\begin{equation}
 \rho_t=100\,\frac{\CE(\mathbf{0})-\CE(e_t)}{\CE(\mathbf{0})}.
\label{eq:loss_reduction}
\end{equation}
Here $e_t$ activates only loop $t$ and $\CE(\mathbf{0})=0.8445$.
All activation conditions use the same frozen-model loss as their reference, and higher values indicate stronger loss reduction.

\begin{table}[htbp]
\centering
\footnotesize
\setlength{\tabcolsep}{4pt}
\caption{\textbf{Single-loop activation of a trained update.} Ouro-1.4B, direct GSM8K recipe. The four loss columns are final-loop teacher-forced cross entropy (lower is better). The last column separately reports standard all-on generation accuracy on all 1,319 test questions, in \%.}
\label{tab:single_occurrence}
\begin{tabular}{llccccc}
\toprule
 & & \multicolumn{4}{c}{Final-loop loss} & All-on \\
\cmidrule(lr){3-6}
Method & LR & Only 1 & Only 2 & Only 3 & Only 4 & accuracy \\
\midrule
LoRA & $10^{-4}$ & 0.7437 & 0.6834 & 0.6208 & 0.5709 & 80.59 \\
Unscaled & $10^{-4}$ & 0.5309 & 0.5082 & 0.4984 & 0.5037 & 77.03 \\
\method & $10^{-4}$ & 0.5847 & 0.5475 & 0.5409 & 0.5432 & 81.73 \\
\midrule
LoRA & $3\times10^{-4}$ & 0.7493 & 0.6835 & 0.6159 & 0.5317 & 77.86 \\
Unscaled & $3\times10^{-4}$ & 0.5152 & 0.4907 & 0.4824 & 0.4895 & 73.84 \\
\method & $3\times10^{-4}$ & 0.5587 & 0.5302 & 0.5199 & 0.5198 & 82.11 \\
\bottomrule
\end{tabular}
\end{table}

At $10^{-4}$, the single-loop loss reductions are 11.9/19.1/26.5/32.4\% for LoRA and 30.8/35.2/35.9/35.7\% for \method.
LoRA's early applications lag far behind its final application in loss reduction.
\method lowers the raw single-loop loss at every position relative to LoRA, with the largest improvement at the first loop.
The second learning rate gives the same pattern: 11.3/19.1/27.1/37.0\% for LoRA and 33.8/37.2/38.4/38.4\% for \method.
\method thus strengthens early applications and narrows the disparity between recurrent positions.

Masking and rescaling have distinct roles in this comparison, and their ordering follows the readout scale.
All readouts use unit gates, the strength at which LoRA and the unscaled rule train each active application, whereas \method trains each active application at strength $1/q=2$; the single-loop readout therefore applies its update at half the training strength.
At all-on inference, the four unit-gate applications equal the expected total update of \method training, $\E[\sum_t g_t]=K=4$, and twice that of unscaled training, $qK=2$.
Accordingly, unscaled masking gives the lowest single-loop losses at its training strength, while inverse-survival rescaling yields the stronger all-on generation result at both learning rates.
The individual-loop diagnostic and the generation comparison therefore support learning reusable updates together with matching their training and inference scale.

\FloatBarrier
\subsection{Generation and Readouts after MetaMath Fine-Tuning}
\label{app:metamath_diagnostics}

These diagnostics use the shared rank-16 MetaMath-GSM checkpoints, with LR $2\times10^{-4}$ for LoRA and $5\times10^{-5}$ for \method, and training seeds 101--103.
\paperref{Table}{tab:single_loop_generation} activates one adapter application at unit strength and generates answers on the full GSM8K test set.
The backbone executes all four loops for every activation position.
All four positions are reported separately, with mean and sample standard deviation over the training seeds.

\paperref{Table}{tab:metamath_readouts} keeps every adapter application enabled and scores the reference solutions at each loop's readout.
The loss is token-weighted cross entropy; this diagnostic uses the full MATH-500 set.
All readout positions use the same reference tokens.

\begin{table}[htbp]
\centering
\footnotesize
\setlength{\tabcolsep}{5pt}
\caption{\textbf{MATH-500 readouts along the fully adapted recurrence.} Ouro-1.4B after MetaMath-GSM fine-tuning. Token-weighted cross entropy, mean $\pm$ SD over three training seeds; lower is better. Every adapter application and all four backbone loops are enabled.}
\label{tab:metamath_readouts}
\begin{tabular}{lcc}
\toprule
Readout & LoRA & \method \\
\midrule
Loop 1 & 1.240 $\pm$ 0.041 & 1.047 $\pm$ 0.010 \\
Loop 2 & 0.836 $\pm$ 0.006 & 0.765 $\pm$ 0.005 \\
Loop 3 & 0.742 $\pm$ 0.009 & 0.717 $\pm$ 0.002 \\
Loop 4 & 0.737 $\pm$ 0.010 & 0.700 $\pm$ 0.005 \\
\bottomrule
\end{tabular}
\end{table}

The largest reduction occurs at the first readout, where cross entropy decreases from 1.240 to 1.047.
\paperref{Figure}{fig:early_adaptation} plots the single-application generation results and the MATH-500 readout trajectory.

\begin{figure}[t]
\centering
\captionsetup[subfigure]{justification=centering,singlelinecheck=false}
\pgfplotslegendfromname{early-adaptation-legend}\par\vspace{0.25em}
\begin{subfigure}[t]{0.48\textwidth}
\centering
\begin{tikzpicture}
\begin{axis}[scale only axis,width=5.15cm,height=3.35cm,
  xmin=0.8,xmax=4.2,ymin=-2,ymax=100,
  xtick={1,2,3,4},ytick={0,25,50,75,100},
  xlabel={Active Adapter Loop},ylabel={Accuracy (\%) $\uparrow$},
  tick label style={font=\footnotesize,text=FigInk},
  legend to name=early-adaptation-legend,legend columns=2,
  legend style={font=\footnotesize,/tikz/every even column/.append style={column sep=1em}}]
\addplot+[basecurve] coordinates {(1,0.025272)+-(0,0.043772) (2,0.303260)+-(0,0.461165) (3,2.628254)+-(0,3.070586) (4,6.065201)+-(0,4.418786)};
\addplot+[ourscurve] coordinates {(1,1.314127)+-(0,0.687928) (2,64.720748)+-(0,13.564179) (3,86.226940)+-(0,0.838547) (4,86.176396)+-(0,0.778105)};
\legend{LoRA,Loop Dropout}
\end{axis}
\end{tikzpicture}
\caption{GSM8K Generation}
\label{fig:early_generation_gsm8k}
\end{subfigure}\hfill
\begin{subfigure}[t]{0.48\textwidth}
\centering
\begin{tikzpicture}
\begin{axis}[scale only axis,width=5.15cm,height=3.35cm,
  xmin=0.8,xmax=4.2,ymin=0.65,ymax=1.35,
  xtick={1,2,3,4},ytick={0.7,0.9,1.1,1.3},
  xlabel={Readout Loop},ylabel={Cross Entropy $\downarrow$},
  tick label style={font=\footnotesize,text=FigInk}]
\addplot+[basecurve] coordinates {(1,1.239712)+-(0,0.040942) (2,0.835959)+-(0,0.006194) (3,0.742415)+-(0,0.009203) (4,0.736633)+-(0,0.010214)};
\addplot+[ourscurve] coordinates {(1,1.046610)+-(0,0.010251) (2,0.765190)+-(0,0.004608) (3,0.717242)+-(0,0.002262) (4,0.700331)+-(0,0.004703)};
\end{axis}
\end{tikzpicture}
\caption{MATH-500 Readouts}
\label{fig:early_readout_math}
\end{subfigure}
\caption{\textbf{Generation and early readouts after MetaMath-GSM fine-tuning.} Ouro-1.4B, rank 16, four backbone loops. Left: GSM8K generation with only the indicated adapter application enabled. Right: MATH-500 teacher-forced cross entropy with all applications enabled, read out at each loop. Points and error bars show mean $\pm$ SD over three training seeds. \method supports generation from individual applications at loops two through four and lowers MATH-500 loss at every readout, most strongly at the first loop.}
\label{fig:early_adaptation}
\end{figure}

\FloatBarrier
\section{Limitations}
\label{app:limitations}

\paragraph{Scope.}
Our experiments cover mathematical fine-tuning of Ouro at two model sizes and four adapter ranks, and instruction tuning of Ouro-1.4B at rank 16, under the learning rates and seeding protocols of \paperref{Appendix}{app:hyperparameters}.
Additional mathematical evaluations on LoopUS-Qwen3-4B and Huginn are reported in \paperref{Appendix}{app:family_comparison}.

\paragraph{Interpretation.}
The curvature analysis describes the objective locally; the finite-mask mixture gives its exact form.
The controls compare masking rules as complete distributions, including their gate support and higher moments.

\end{document}